\newcommand{\fhat}{\hat{f}}
\newcommand{\Reg}{\textrm{Reg}}
\newcommand{\algname}{{\tt LORIL}}
\newcommand{\bonus}{b_t}
\newcommand{\protname}{{\tt LHI}}
\newcommand{\TV}[1]{\nbr{#1}_{\textrm{TV}}}
\title{Provable Interactive Learning with Hindsight Instruction Feedback}
\author{
  Dipendra Misra$^{1,\star}$ \qquad Aldo Pacchiano$^{2, 3,\star}$ \qquad Robert E. Schapire$^{1, \star}$  \\
  \vspace{2mm} \\
  Microsoft Research New York$^1$ \quad Boston University$^2$\quad Broad Institute of MIT and Harvard$^3$\\[.1cm]
  ($\star$ equal contribution)\\[.1cm]
\texttt{dimisra@microsoft.com, pacchian@bu.edu, schapire@microsoft.com} 
}
\date{}
\newcounter{protocol}
\newenvironment{protocol}[1][htb]{%
  \let\c@algorithm\c@protocol
  \renewcommand{\ALG@name}{Protocol}%
  \begin{algorithm}[#1]%
  }{\end{algorithm}
}
\begin{document}
\maketitle

\begin{abstract}
We study interactive learning in a setting where the agent has to generate a response (e.g., an action or trajectory) given a context and an instruction. In contrast, to typical approaches that train the system using reward or expert supervision on response, we study \emph{learning with hindsight labeling} where a teacher provides an instruction that is most suitable for the agent's generated response. This hindsight labeling of instruction is often easier to provide than providing expert supervision of the optimal response which may require expert knowledge or can be impractical to elicit. We initiate the theoretical analysis of \emph{interactive learning with hindsight labeling}. We first provide a lower bound showing that in general, the regret of any algorithm must scale with the size of the agent's response space. Next we study a specialized setting where the underlying instruction-response distribution can be decomposed as a low-rank matrix. We introduce an algorithm called $\algname$ for this setting and show that its regret scales
with $\sqrt{T}$ and depends on the \emph{intrinsic rank} but does not depend on the size of the agent's response space. We provide experiments in two domains showing that $\algname$ outperforms baselines even when the low-rank assumption is violated.%
\end{abstract}

\section{Introduction}
\label{sec:intro}

Success of a machine learning approach is intimately tied to the ease of getting training data. For example, language models~\citep{brown2020language,achiam2023gpt}, which are one of the most successful applications of machine learning, are trained on an abundance of language data which is both easy to elicit from non-expert users and is available offline. In contrast, consider the task of a robot following instructions specified by a human user~\citep{misra2016tell,blukis2019,myers2023goal}. It is expensive to collect ground truth robot trajectories making standard imitation learning (IL) approaches~\citep{pomerleau1991efficient} expensive to apply, whereas reinforcement learning (RL)~\citep{sutton2018reinforcement} approaches suffer from high sample complexity. This makes IL and RL-- the two most common ways of training agents, expensive in practice. 
Motivated by the limitations of IL and RL, a line of work has proposed using \emph{hindsight labeling}, where the agent (robot in our example) generates a response (trajectory) given an instruction, and a teacher instead of providing expensive ground truth response, provides the instruction that is suitable for the agent's response~\citep{fried2018speaker,nguyen2021interactive}. This reverses the labeling problem to an easier labeling problem, since instructions are typically in a format such as natural language, which can be inexpensively elicited from non-expert users in contrast to robot trajectories. While this approach has been applied empirically, a theoretical understanding remains absent. In this work, we initiate the theoretical understanding of interactive learning from hindsight instruction.

\begin{figure*}
\includegraphics[scale=0.29]{./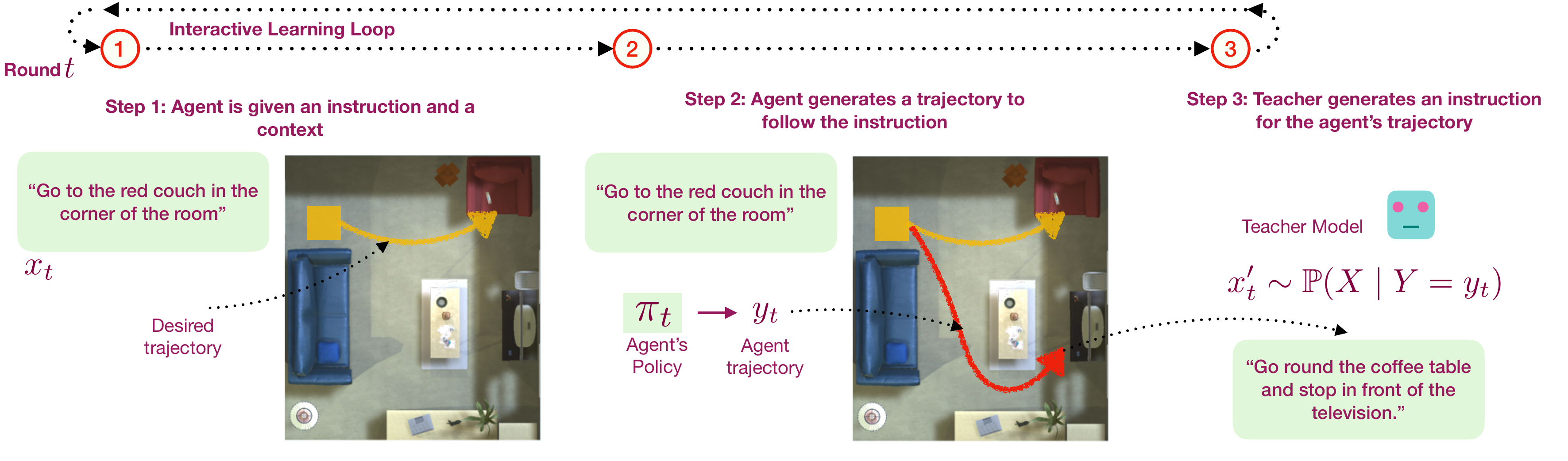}
\caption{Shows sketch of our interactive \textbf{L}earning from \textbf{H}indsight \textbf{I}nstruction (\protname) setting. The agent interacts with the world iteratively. In each round (or time step), the agent is given an instruction $x_t$ and a context $s_t$. In our case, the context $s_t$ is the house layout. In response, the agent generates a trajectory (response) $y_t$ which is then labeled by a teacher model with an instruction $x'_t$ (\emph{hindsight instruction}). The agent \emph{never} receives any expert response or rewards.}
\label{fig:flowchart}
\end{figure*}

We consider the learning setup illustrated in~\pref{fig:flowchart}. In this setup, a teacher is teaching an agent to navigate in a virtual home environment. In each round, the world gives an instruction and a context to the agent. The instruction in this case is expressed in natural language. The context is an image that provides information about the environment such as the position, color, and sizes of different objects. The goal of the agent is to generate a trajectory that follows the given instruction. In the beginning, the agent lacks any language understanding and, therefore, cannot generate correct trajectories. We assume access to a teacher that can provide an instruction that best describes the agent's trajectory. This type of feedback can be viewed as a \emph{hindsight instruction}, as it was the correct instruction in hindsight for the trajectory generated by the agent. In each round, the agent generates a trajectory and receives a hindsight instruction from the teacher. This allows the agent to learn a mapping from the instruction space to the trajectory space, which helps improve the agent's policy. We call this learning approach as \textbf{L}earning from \textbf{H}indsight \textbf{I}nstruction ($\protname$).

There are several different approaches for training a decision-making agent. One of the most commonly used approaches is \emph{imitation learning} (IL) where a teacher provides access to expert demonstrations allowing the agent to learn the right behavior~\citep{ross2011reduction}. For the example in~\pref{fig:flowchart}, this will require the teacher to be able to understand the agent's action space and dynamics. This often requires domain expertise and can only be provided by expert teachers and may require specialized tools.\footnote{One such commonly used approach is motion capture where a human can record behavior that can be transferred to a humanoid agent but this requires specialized tools.} In contrast, a non-expert user can easily provide an instruction for the red trajectory in~\pref{fig:flowchart}. 

Reinforcement learning (RL) is another widely used approach for training agents that overcomes the expense of collecting expert demonstrations by directly optimizing a reward function that is more user-friendly to provide~\citep{sutton2018reinforcement}. However, RL approaches are less sample efficient than IL approaches making them less suited for real-world settings.%
In contrast, learning from hindsight instruction uses instruction feedback which is user-friendly and more natural for humans to provide. Further, instruction feedback contains significantly richer information than scalar rewards which can help in reducing the sample complexity compared to RL~\citep{nguyen2021interactive}.

Because of its promise, learning from hindsight labeling has been explored in various applications~\citep{andrychowicz2017hindsight,fried2018speaker,nguyen2021interactive}. However, a principled understanding of this setting remains absent despite these empirical results. In particular, we focus on an interactive learning setting where a teacher trains the agent using hindsight instructions. For these settings, the natural evaluation metric is regret which penalizes the agent for failing to follow a given instruction. A key challenge in designing algorithms for this setting is that the agent has to both \emph{exploit} to follow the given instruction, but also \emph{explore} to improve its understanding capabilities more generally. In this work, we initiate the theoretical study of this setting. We first present a formal interactive learning setting and define a notion of regret. Motivated by natural settings where the teacher is a human user, we assume access to a black box teacher which can generate a sampled instruction given the agent's response but where the agent \emph{does not have access} to the teacher's probability values. The agent is evaluated using a \emph{hidden reward} given by the probability of the teacher labeling the agent's response by the original given instruction.

We first prove a lower bound for this setting showing that in the worst case, the regret bounds for any algorithm will scale polynomially with the size of the agent's response space. In many applications such as our robot navigation example, the agent's response is a trajectory, leading to an exponentially large response space. However, in practice using function approximations and featurization enables generalization in infinitely large spaces. Motivated by this we introduce a low-rank setting where where the agent has access to a feature representation of the response and context and the teacher's distribution admits a low-rank decomposition in this feature space. %
We introduce an algorithm $\algname$ for this setting and derive regret bounds that scale as $\sqrt{T}$ with the horizon $T$. Importantly, the regret \emph{does not depend} upon the size of the agent's response or the size of instruction or context space, and instead depends on the rank of the teacher's distribution, which can be significantly smaller in practice.

We evaluate $\algname$ on two tasks. In the first setting, we use a synthetic task where low-rank assumption holds and show that $\algname$ achieves lower regret compared to baselines. In the second setting, we apply $\algname$ to a setting with natural language instruction and images and show that insights from $\algname$ help \emph{even when the low-rank assumption does not hold}.

We include a discussion on the related literature after presenting our results in~\pref{sec:related-work}. The code for all experiments in the paper can be found at~\url{https://github.com/microsoft/Intrepid}.

\section{Preliminary and Overview}
\label{sec:prelim}

We first introduce the mathematical notations before providing an overview of our setup.

\paragraph{Notation.} For a given $N \in \NN$, we define $[N] = \{1, 2, \cdots, N\}$. For a given countable set $\Ucal$, we denote the set of all distributions over $\Ucal$ by $\Delta(\Ucal)$. For a given positive-definite matrix $A \in \RR^{d \times d}$, we define the induced norm of a vector $v \in \RR^d$ as $\nbr{v}_{A} = \sqrt{v^\top A v}$. %

\paragraph{Interactive Learning from Hindsight Instruction.} We define the space of contexts as $\Scal$, the space of instructions as $\Xcal$ and the space of all possible agent response as $\Ycal$. We assume $\Scal$, $\Xcal$, and $\Ycal$ to be finite for our analysis but allow them to be arbitrarily large. The finiteness is only a mild assumption in practice, as it still allows us to handle the most common data types. For example, if $\Xcal$ denotes the space of all $m\times n$ RGB images with each pixel taking values in $\{0, 1, \cdots, 255\}$, then $|\Xcal| = 256^{3mn}$ which is an exponentially large but finite value.

\begin{protocol}[t]
\caption{Shows the protocol for our setting: Learning from Hindsight Instruction (\protname). The line in \textcolor{blue}{blue} needs to be implemented by an algorithm implementing the protocol.}
\label{proto:protocol}
\begin{algorithmic}[1]
\For{$t=1, 2, \cdots, T$}
\State World presents $s_t, x_t$ possibly adversarially
\State \textcolor{blue}{Agent generates a response $y_t \in \Ycal$}
\State Teacher describes the response $x'_t \sim \PP(X \mid y_t, s_t)$
\State Evaluate using a hidden reward $r_t = \PP(x_t \mid y_t, s_t)$
\EndFor
\State Return $\sum_{t=1}^T r_t$
\end{algorithmic}
\end{protocol}

The agent interacts with the world repeatedly over $T$ rounds.~\pref{proto:protocol} shows our learning framework. In the $t^{th}$ round, the world presents a context $s_t$ and an instruction $x_t$ sampled according to a fixed distribution $D_t(\cdot, \cdot \mid x_1, y_1, s_1, \cdots, x_{t-1}, s_{t-1}, y_{t-1})$ that can depend on the past history. Given the instruction and the context, the agent generates a response $y_t \in \Ycal$. Ideally, we want the agent to generate a response that fulfills the intent of the instruction. After generating the response, the agent receives a \emph{hindsight instruction} $x'_t$ sampled from a fixed conditional distribution model $\PP(X \mid Y=y_t, S = s_t)$. This conditional distribution models a \emph{teacher} that provides an instruction that is most appropriate for the agent response. In a typical setting, this teacher will be modeled using a human-in-the-loop.

The agent \emph{does not} have access to $\PP(X \mid Y=y_t, S=s_t)$ but can observe a sample from this distribution by generating a response and asking the teacher to label it with an instruction. This is because in a human-in-the-loop setting, we don't have access to the human teacher's distribution. %

The teacher model $\PP(X \mid Y, S = s_t)$ is in practice highly stochastic since there can be many possible ways to describe instructions for a given response. Further, the space of all possible responses and instructions can be impractically large, necessitating the use of function approximation.

\paragraph{Computing Regret.} Given a state $s_t$ and context $x_t$, the ideal response should maximize the probability of the teacher labeling the response with the right instruction $x_t$, i.e., the agent should play $y = \arg\max_{y \in \Ycal} \PP(x_t \mid y, s_t)$. We can, therefore, view $\PP(x_t \mid y, s_t)$ as a \emph{latent reward} for generating response $y$. This leads to a natural notion of regret given by:
\begin{equation}\label{eqn:regret}
    \Reg(T) = \sum_{t=1}^T \rbr{\max_{y \in \Ycal} \PP(x_t \mid s_t, y) - \PP(x_t \mid s_t, y_t)},
\end{equation}
where $s_t, x_t$ are the context and instruction in round $t$ and $y_t$ is the response generated by the agent. 

There can be alternative ways to define regret in~\pref{eqn:regret}.
Log-probabilities $\log \PP(X \mid S, Y)$ may appear more natural to use instead of probabilities, however, the former is unbounded which makes it ill-suited for defining reward. For example, if in the first round, the agent generates a response $y_1$ for which $\PP(x_1 \mid y_1, s_1) = 0$, then the agent's regret is unbounded irrespective of the agent's performance in later rounds. Another choice for reward is the likelihood of the response $\PP(y \mid x_t, s_t)$. However, this requires assuming a prior distribution over $y$ which can be hard to realize.

\section{Lower Bound in the General Case}
\label{sec:lower-bound}

We first prove that it is impossible to design an algorithm for~\pref{proto:protocol} with a regret bound that doesn't scale polynomially in the size of plausible responses $|\Ycal|$. 

We introduce the concept of `stochastic worlds' to prove our lower bound. A stochastic world $W$ consists of a set of instructions, contexts marginal distribution $\PP_W(X, S)$ and a conditional distribution of instructions given responses and contexts $\PP_W(X|Y, S)$. 

When at time $t$ an agent $\mathbb{A}$ interacts via \pref{proto:protocol} with a stochastic world $W$, the world produces an instruction $x_t$ and context $s_t$ sampled from a \emph{time-independent} distribution $\PP_W(X, S)$.  We use the notation $\mathbb{P}_{W, \mathbb{A}}$ and $\mathbb{E}_{W, \mathbb{A}}$ to denote the measure and expectations over trajectories $(s_1, x_1, y_1, x_1', \cdots, s_T, x_T, y_T, x_T')$ resulting from the interaction between $\mathbb{A}$ and world $W$. We show that for any $K \in \mathbb{N}$, and any algorithm $\mathbb{A}$, there is a stochastic world where the regret of algorithm $\mathbb{A}$ satisfies $\Reg(T) \geq \Omega(\sqrt{KT})$ when $\mathbb{A}$ interacts with $W$ through \pref{proto:protocol}.

To prove our main result we exhibit a family of stochastic worlds $\{W_i\}$, such that world $W_i$ is defined by context space $S = \{ s_o\}$ instruction space $\Xcal = \{A, B\}$, response set $\Ycal = [K]$ marginal $\PP_{W_i}(X, S ) = \mathrm{Uniform}((A,s_o),(B,s_o))$ for all $i \in [K]$, and conditional
\begin{equation*}
    \PP_{W_i}(X | y_j) =   1/2 + \sqrt{K/T} \cdot \mathbf{1}(j = i)\cdot(1-2\cdot\mathbf{1}(X = B)) .
\end{equation*} The context distribution is a delta mass around context $s_o$. In world $W_i$ the optimal response for instruction $X = A$ equals $y_{i}$, and the optimal response for instruction $X = B$ is any $y_j$ for  $ j \neq i $. Any suboptimal decision, regardless of the instruction incurs in regret of order $\sqrt{K/T}$. Our main result is the following.

\begin{restatable}{theorem}{lemmalowerbound}\label{lemma::lower_bound}
Let $T \geq 256\log(2e)$ and $K \geq 8e$. For any algorithm, there is at least one stochastic world $W_{\hat{i}}$ such that $\Reg(T)  \geq \frac{\sqrt{KT}}{8}$ such that with probability at least $1/4e$. 
\end{restatable}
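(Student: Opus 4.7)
The plan is a standard information-theoretic lower bound via change of measure, with an averaging step over the family $\{W_i\}_{i=1}^K$. In world $W_i$, the per-round regret equals $\epsilon := \sqrt{K/T}$ times the indicator that the response is suboptimal: $y_t \neq y_i$ when $x_t = A$, or $y_t = y_i$ when $x_t = B$. Writing $T_A := \sum_t \mathbf{1}(x_t = A)$ and $N_i^A := \sum_t \mathbf{1}(x_t = A, y_t = y_i)$, the cumulative regret satisfies $\Reg(T) \geq \epsilon (T_A - N_i^A)$. Hence on the event $\mathcal{E}_i := \{T_A \geq T/4\} \cap \{N_i^A \leq T/8\}$ one has $\Reg(T) \geq \epsilon T/8 = \sqrt{KT}/8$, and it suffices to exhibit $\hat{i}$ with $\PP_{W_{\hat{i}}, \mathbb{A}}(\mathcal{E}_{\hat{i}}) \geq 1/(4e)$. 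Since the marginal $\PP_W(X,S)$ is the same uniform distribution in every world, $T_A \sim \mathrm{Binomial}(T, 1/2)$, and Hoeffding together with $T \geq 256\log(2e)$ gives $\PP(T_A < T/4) \leq e^{-T/8}$, which is negligible compared to the target $1/(4e)$.

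Next I would introduce an auxiliary reference world $W_0$ with $\PP_{W_0}(X|y_j) = \mathrm{Uniform}\{A,B\}$ for every response $y_j$; under $W_0$ the algorithm's behavior is structurally decoupled from any hidden index. The pointwise identities $\sum_i N_i^A \equiv T_A$ and $\sum_i N_i \equiv T$ imply $\sum_i \EE_{W_0}[N_i^A] = T/2$ and $\sum_i \EE_{W_0}[N_i] = T$. A pigeonhole/Markov argument applied to both sums identifies an index $\hat{i}$ whose expected play counts $\EE_{W_0}[N_{\hat{i}}^A]$ and $\EE_{W_0}[N_{\hat{i}}]$ are simultaneously close to their averages $T/(2K)$ and $T/K$. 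For this $\hat{i}$, Markov's inequality yields $\PP_{W_0}(N_{\hat{i}}^A \geq T/8) \leq O(1/K)$, which is at most $1/(2e)$ under the hypothesis $K \geq 8e$.

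The crucial step is the change of measure. Because the only world-dependent component of the trajectory distribution is the hindsight instruction $x'_t$ given $y_t = y_{\hat{i}}$, the KL chain rule gives
\[
  \mathrm{KL}(\PP_{W_0, \mathbb{A}} \,\|\, \PP_{W_{\hat{i}}, \mathbb{A}}) \;=\; \EE_{W_0, \mathbb{A}}[N_{\hat{i}}] \cdot \mathrm{KL}\bigl(\mathrm{Ber}(1/2) \,\|\, \mathrm{Ber}(1/2 + \epsilon)\bigr),
\]
which by the quadratic Bernoulli KL estimate is bounded by an absolute constant, since $\epsilon^2 \EE_{W_0}[N_{\hat{i}}]$ is of order $(K/T)(T/K) = 1$. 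I would then apply the Bretagnolle--Huber inequality $\PP_{W_0}(\mathcal{A}) + \PP_{W_{\hat{i}}}(\mathcal{A}^c) \geq \tfrac{1}{2}\exp(-\mathrm{KL})$ with $\mathcal{A} = \{N_{\hat{i}}^A \geq T/8\}$, and combine with the Markov and Hoeffding estimates above via a union bound to conclude $\PP_{W_{\hat{i}}}(\mathcal{E}_{\hat{i}}) \geq 1/(4e)$.

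The main obstacle is the quantitative step in the Bretagnolle--Huber application: the lower bound $\tfrac{1}{2}e^{-\mathrm{KL}}$ is very sensitive to the exact KL estimate, so the slack between the naive bound $\mathrm{KL} \leq 4\epsilon^2 \EE_{W_0}[N_{\hat{i}}]$ and what is needed to reach probability $1/(4e)$ after subtracting the Markov slack $O(1/K)$ is tight. Pulling this through requires (i) carefully choosing $\hat{i}$ from the intersection of the indices with small $\EE_{W_0}[N_i^A]$ and small $\EE_{W_0}[N_i]$ (neither of which is automatic separately), (ii) sharpening the Bernoulli KL estimate by using the identity $\mathrm{KL}(\mathrm{Ber}(1/2) \,\|\, \mathrm{Ber}(1/2 + \epsilon)) = -\tfrac{1}{2}\log(1 - 4\epsilon^2)$ rather than a loose quadratic upper bound, and (iii) using the explicit hypotheses $K \geq 8e$ and $T \geq 256\log(2e)$, which are calibrated precisely to absorb the Markov and Hoeffding error terms, respectively.
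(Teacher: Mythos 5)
Your high-level architecture (uniform reference world, KL chain rule localizing the divergence to rounds where $y_{\hat i}$ is played, Bretagnolle--Huber, a pigeonhole over the $K$ indices, and a Hoeffding bound on the instruction counts) is the same as the paper's. The genuine gap is in the one step where you depart from it: selecting a single index $\hat i$ in advance and controlling the reference-world probability $\PP_{W_0}(N_{\hat i}^A \geq T/8)$ by Markov. Pigeonhole can only guarantee $\EE_{W_0}[N_{\hat i}] \leq T/K$ (and hence $\EE_{W_0}[N_{\hat i}^A] \leq T/K$; if you instead optimize $\EE_{W_0}[N_i^A]$ alone you lose control of the KL term), so Markov gives at best $\PP_{W_0}(N_{\hat i}^A \geq T/8) \leq 8/K$, which at $K = 8e$ is $1/e \approx 0.37$. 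On the other side, the gain term cannot exceed $\tfrac12 e^{-\mathrm{KL}}$ with $\mathrm{KL} = \EE_{W_0}[N_{\hat i}]\cdot\mathrm{KL}(\mathrm{Ber}(1/2)\,\|\,\mathrm{Ber}(1/2\pm\epsilon))$, and since $\mathrm{KL}(\mathrm{Ber}(1/2)\,\|\,\mathrm{Ber}(1/2\pm\epsilon)) = -\tfrac12\log(1-4\epsilon^2) \geq 2\epsilon^2$, an algorithm that spreads its plays uniformly under $W_0$ forces $\EE_{W_0}[N_i] = T/K$ for every $i$ and hence $\mathrm{KL} \geq 2$ no matter which $\hat i$ you pick; your step (ii), using the exact KL identity, only makes the exponent larger. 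Thus your lower bound is at most $\tfrac12 e^{-2} - 8/K - e^{-T/8}$, which is negative at $K \geq 8e$ (indeed $\tfrac12 e^{-2} \approx 0.068$ is already below the target $1/(4e) \approx 0.092$ before any subtraction), so the argument as quantified cannot reach the stated conclusion. In particular, your claim in (iii) that $K \geq 8e$ and $T \geq 256\log(2e)$ are calibrated to absorb your Markov and Hoeffding slack is incorrect; they are calibrated for a different bookkeeping.

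The missing idea is the paper's disjointness-and-summation step, which replaces the per-index Markov bound. Define, as in the paper, the success event for index $i$ as ``correct decisions on at least $7T/8$ rounds'' intersected with the event that at most $5T/8$ rounds have instruction $B$; call it $\widetilde{\mathcal{E}}_i$. On $\widetilde{\mathcal{E}}_i$ there are at least $T/4$ rounds with $X_t = A$ and $Y_t = i$, and every such round counts as an error for any other index $j \neq i$, so the correct-decision count for $j$ is at most $3T/4 < 7T/8$: the events $\widetilde{\mathcal{E}}_i$ are pairwise disjoint, whence $\sum_i \PP_{W_\emptyset}(\widetilde{\mathcal{E}}_i) \leq 1$. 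One then applies Bretagnolle--Huber to \emph{all} of the $\lfloor K/2 \rfloor$ least-played indices (each with $\EE_{W_\emptyset}[n_i(T)] \leq 2T/K$, keeping the KL bounded by a constant), sums the resulting inequalities, and uses the total budget of $1$ to extract one index $\hat i$ at a loss of only $1/\lfloor K/2 \rfloor \leq 1/(4e)$; this is how the hypothesis $K \geq 8e$ actually enters. If you keep your single-index Markov route, you can still get a constant-probability $\Omega(\sqrt{KT})$ bound, but only for substantially larger $K$ and with a smaller probability constant, i.e., a weaker statement than the theorem as written.
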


The proof can be found in Appendix~\ref{section::lower_bound_appendix}. Lemma~\ref{lemma::lower_bound} implies the expected regret lower bound,

\begin{restatable}{corollary}{expectedregretlowerbound}\label{corollary::expected_regret_lowerbound} If the conditions of Lemma~\ref{lemma::lower_bound} hold then for any algorithm there exists at least one stochastic world $W_{\hat{i}}$ such that $\overline{\Reg}(T) \geq\Omega(\sqrt{KT})$. Where,
\begin{equation*}
\overline{\Reg}(T) = \mathbb{E}_{W_{\hat{i}}, \mathbb{A}}\left[ \sum_{t=1}^T \max_{y \in Y} \PP(  x_t |y, s_o)  - \PP(x_t | y_t, s_o )\right].
\end{equation*}
\end{restatable}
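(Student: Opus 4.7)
The plan is to derive the expected regret lower bound directly from the high-probability lower bound in Lemma~\ref{lemma::lower_bound} using the standard observation that regret is pointwise non-negative. Since $y_t \in \Ycal$, each per-round term in the regret sum satisfies
\begin{equation*}
\max_{y \in \Ycal} \PP_{W_{\hat{i}}}(x_t \mid y, s_o) - \PP_{W_{\hat{i}}}(x_t \mid y_t, s_o) \geq 0,
\end{equation*}
so $\Reg(T)$ is a non-negative random variable on the probability space induced by the interaction of $\mathbb{A}$ with $W_{\hat{i}}$.

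Next, I would apply Lemma~\ref{lemma::lower_bound} to obtain a world $W_{\hat{i}}$ (depending on $\mathbb{A}$) such that
\begin{equation*}
\mathbb{P}_{W_{\hat{i}}, \mathbb{A}}\!\left(\Reg(T) \geq \tfrac{\sqrt{KT}}{8}\right) \geq \frac{1}{4e}.
\end{equation*}
Combining non-negativity with this tail bound via a one-line Markov-style argument gives
\begin{equation*}
\overline{\Reg}(T) = \mathbb{E}_{W_{\hat{i}}, \mathbb{A}}[\Reg(T)] \geq \mathbb{E}_{W_{\hat{i}}, \mathbb{A}}\!\left[\Reg(T)\cdot\mathbf{1}\{\Reg(T) \geq \tfrac{\sqrt{KT}}{8}\}\right] \geq \frac{\sqrt{KT}}{8}\cdot\frac{1}{4e} = \frac{\sqrt{KT}}{32e},
\end{equation*}
which is $\Omega(\sqrt{KT})$ as required. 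The same $W_{\hat{i}}$ selected by Lemma~\ref{lemma::lower_bound} serves as the witness world for the corollary.

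There is essentially no obstacle to this argument beyond checking that $\overline{\Reg}(T)$ in the corollary is indeed the expectation of the random quantity $\Reg(T)$ from \pref{eqn:regret} specialized to the constant context $s_t = s_o$ (which holds because $\PP_{W_i}(X, S)$ is supported only on $s_o$), and that the probability on which Lemma~\ref{lemma::lower_bound} conditions is exactly $\mathbb{P}_{W_{\hat{i}}, \mathbb{A}}$. Both are immediate from the construction of the family $\{W_i\}$ given before Lemma~\ref{lemma::lower_bound}, so the whole argument reduces to a short paragraph following the lemma.
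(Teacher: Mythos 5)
Your proposal is correct and follows essentially the same argument as the paper: invoke Lemma~\ref{lemma::lower_bound} to get the high-probability event, use pointwise non-negativity of the per-round regret, and lower bound the expectation by the event's probability times $\sqrt{KT}/8$, yielding $\sqrt{KT}/(32e) = \Omega(\sqrt{KT})$. No gaps.
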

The proof of this result can also be found in Appendix~\ref{section::lower_bound_appendix}. Theorem~\ref{lemma::lower_bound} shows that for tractable hindsight learning it is necessary to impose structural assumptions on the conditional probabilities $\PP(X|Y, S)$. We explore one such assumption in the next sections.

\section{Provable Learning in Low-Rank Setting}
\label{sec:upper-bound}

The analysis in~\pref{sec:lower-bound} shows that the regret scales as $\Omega(\sqrt{|\Ycal|})$ which makes this an intractable setting when $\Ycal$ is extremely large. For settings with typically large input or output spaces, it is natural in practice to use function approximation. For example, a trajectory can be encoded using a neural network to a representation that contains the relevant information. In statistical learning theory, significant progress has been made in the study of learning with function approximation~\citep{misra2020kinematic,sekhari2021agnostic,foster2021statistical}. In particular, problems with low-rank structures~\citep{agarwal2020flambe,jin2020provably} have received significant attention due to their abilities to model commonly occurring settings and the success of corresponding algorithms in real-world problems even where the low-rank assumption is violated~\citep{henaff2022exploration}. Motivated by this, we introduce and study a setting where the teacher model $\PP(X \mid Y, S)$ admits a low-rank decomposition.

\paragraph{Low-Rank Teacher Model.} We consider a specialization of our general setup where the teacher model follows a low-rank decomposition. Formally, we assume that there exists $f^\star: \Xcal \rightarrow \RR^d$ and $g^\star: \Ycal \rightarrow \RR^d$ such that
\begin{equation*}
\forall s\in \Scal, x \in \Xcal, y \in \Ycal, \qquad    \PP(x \mid y, s) = f^\star(x)^\top g^\star(y, s),
\end{equation*}
where $d$ is the \emph{intrinsic dimension} of the problem which is much smaller than the size of $\Scal, \Xcal$, and $\Ycal$ which can all be infinitely large. We assume that the agent has knowledge of $g^\star$ but does not know $f^\star$.%

We assume access to a model class $\Fcal$ to learn $f^\star$. Our goal is to get regret guarantees that do not scale with $|\Xcal|, |\Ycal|, |\Scal|$ and instead only depend on the intrinsic dimension $d$ of the problem and the statistical complexity of $\Fcal$.

\paragraph{$\algname$ Algorithm.} We present the ``Learning in \textbf{LO}w-\textbf{R}ank models from \textbf{I}nstruction \textbf{L}abels" algorithm ($\algname$): for low-rank teacher models in~\pref{alg:known_gstar}. The algorithm assumes access to the embedding function $g^\star$ for encoding the agent's response. In practice, such a function can be available either using a pre-trained representation model or by using a self-supervised learning objective such as autoencoding. We discuss some implementation choices later in the experiment section. %

$\algname$ implements~\pref{proto:protocol}. In the $t^{th}$ round, the algorithm first computes a maximum likelihood estimation $\fhat_t$ of $f^\star$ using the historical data (\pref{line:known-g-mle-estimate}). We use this to define a policy $\pi_t$ to generate a response $y_t$. $\algname$ is based on the principle of optimism under uncertainty. As per this principle, we first compute an appropriate uncertainty measure $\bonus(y)$ for a response $y \in \Ycal$ such that we know with high probability that the true value of a response, i.e., $f^\star(x_t)^\top g^\star(y,s_t)$ lies in $\sbr{\fhat_t(x_t)^\top g^\star(y,s_t) - \bonus(y, s_t), \fhat_t(x_t)^\top g^\star(y,s_t) + \bonus(y, s_t)}$ with high probability. As $\fhat_t(x_t)^\top g^\star(y,s_t)$ is the current estimate of the value of a response $y$ in the $t^{th}$ round, we can view $b(y, s_t)$ as defining a confidence interval for a given response $y$ and context $s_t$. Second, we take the action that has the maximum possible value in the confidence interval, namely:
\begin{equation}\label{eqn:elliptic-bonus}
    y_t = \arg\max_{y \in \Ycal} \quad \biggl(\underbrace{\fhat_t(x_t)^\top g^\star(y,s_t)}_\text{estimated model value} + \underbrace{b_t(y, s_t)}_\text{bonus}\biggr).
\end{equation}
For low-rank models, we will show that $b_t(y, s_t)$ can be expressed as $\Ocal(\nbr{g^\star(y,s_t)}_{\widehat{\Sigma}^{-1}_t})$ where $\widehat{\Sigma}_t = \lambda_t \II + \sum_{l=1}^{t-1} g^\star(y_l,s_l) g^\star(y_l, ,s_l)^\top$ is a positive definite matrix capturing information about historical data. This can be viewed as a positive definite matrix $\lambda_t \II$ were $\lambda_t > 0$ and a sum of rank one positive semi-definite matrices $g^\star(y_l,s_l) g^\star(y_l,s_l)^\top$  which form a covariance matrix $\sum_{l=1}^{t-1} g^\star(y_l,s_l) g^\star(y_l, s_l)^\top$. The quantity $b_t(y, s_t)$ can be viewed as a bonus in~\pref{eqn:elliptic-bonus} and is known as \emph{elliptic bonus} in the literature and is a frequently appearing quantity in the study of linear models~\citep{abbasi2011improved} and low-rank models~\citep{agarwal2020flambe}.\footnote{The word elliptic comes from the fact that for a positive definite matrix $\Sigma$, the set $\{y \mid y^\top \Sigma^{-1} y \le 1\}$ denotes an ellipsoid centered at 0.}

The agent computes the optimistic response $y_t = \pi_t(x_t)$ (\pref{line:alg-generate-response}) and plays it. In response, the teacher provides a description $x'_t$ (\pref{line:alg-teacher-response}) which is added along with the agent response to the historical data.

Note that the agent never has direct access to $f^\star$ or the true model $\PP(X \mid Y, S)$, but only has access through feedback generated by the teacher model and through its knowledge of $g^\star$ and $\Fcal$. Further, the agent is also unaware of the true horizon length $T$ which is often unknown in practice.

\paragraph{Computational Efficiency.} The computation of the covariance matrix can be performed easily as can the computation of bonus $b_t$ for a given response. The inverse of the covariance matrix can be computed efficiently in a numerically stable way using the Sherman–Morrison formula~\citep{sherman1949adjustment}. The two main computationally expensive steps in $\algname$ are maximum-likelihood estimation and solving the optimization in~\pref{line:policy_definition}-\ref{line:alg-generate-response}.
\begin{algorithm}[!t]
    \caption{\textbf{$\algname$($g^\star$, $\Fcal$)}: Learning in \textbf{LO}w-\textbf{R}ank models from \textbf{I}nstruction \textbf{L}abels}\label{alg:known_gstar}
    \begin{algorithmic}[1]
        \Require Response embedding function $g^\star: \Scal \times \Ycal \rightarrow \RR^d$
        \Require Model class $\Fcal = \{f: \Xcal \rightarrow \RR^d\}$
        \State Define $\lambda_t = \frac{1}{t}$.
        \For{ $t=1, 2, \cdots, T$}
            \State Compute MLE estimator $\widehat{f}_t$ using $\{x_\ell', y_\ell, s_\ell\}_{\ell=1}^{t-1}$.\label{line:known-g-mle-estimate}
            \begin{equation*}
                \fhat_t = \arg\max_{f \in \Fcal} \sum_{\ell=1}^{t-1} \ln \fhat_t(x'_\ell)^\top g^\star(y_\ell, s_\ell)
            \end{equation*}
            \State Define empirical covariance matrix \label{line:empirical-covariance} 
            \begin{equation}\label{eqn:empirical_covariance_matrix}
            \widehat{\Sigma}_t = \lambda_t I + \sum_{\ell=1}^{t-1} g^\star(y_\ell, s_\ell) g^\star(y_\ell, s_\ell)^\top
            \end{equation}
            \State Define policy for this round \begin{equation}\label{eqn:policy_definition}
            \pi_t: x,s \mapsto  \argmax_{y \in \Ycal} \rbr{\fhat_t(x)^\top g^\star(y, s)   + b_t(y, s)}
    \end{equation}\label{line:policy_definition}
            where $$b_t(y, s) =  C'\left( \sqrt{\log(t|\mathcal{F}|/\delta)} + \sqrt{\lambda_t}B \right)\| g^\star(y, s) \|_{\widehat{\Sigma}_t^{-1}} $$ and $C'$ is determined by~\pref{eqn:upper_bound_cov_norm_single_point_main} in~\pref{lem:estimation_sigma_t_f_main} .\vspace*{0.2cm}
            \State Agent generates the response $y_t = \pi_t(x_t, s_t)$ \label{line:alg-generate-response}\vspace*{0.2cm}
             \State Teacher describes the response $x'_t \sim \PP(X \mid y_t, s_t)$\label{line:alg-teacher-response}\vspace*{0.2cm}
             \State Evaluate using a \emph{hidden reward} $r_t = \PP(x_t \mid y_t, s_t)$\vspace*{0.2cm}
        \EndFor
        \Return $\sum_{t=1}^T r_t$
    \end{algorithmic}
\end{algorithm}
The maximum likelihood estimation is routinely computed for complex function classes such as deep neural networks in practice. However, in this case, the main challenge is in defining a function class $\Fcal$ such that $f(\cdot)^\top g^\star(y,s)$ is a well-defined distribution. This question has been addressed for low-rank models~\citep{zhang2022making} and we expect the same tools to also help here. The optimization in \pref{line:policy_definition}-\ref{line:alg-generate-response} can be trivially solved when $\Ycal$ is small enough to be enumerated. When $\Ycal$ is exponentially large, this step can be challenging. One strategy can be to use a proposal distribution $q(y \mid x_t, s_t)$ to generate a set of $K$ responses, and then find the response with maximum objective value in~\pref{eqn:policy_definition}. The proposal distribution can be trained by performing MLE on historic data and modeling $y$ autoregressively. 
However, we leave a computational study with exponentially large $\Ycal$ for future work.

\section{Theoretical Analysis}

Our main result is to show \pref{alg:known_gstar} satisfies a sublinear regret bound in the realizable setting,
\begin{restatable}{assumption}{assumptionrealizability}[Realizability]\label{assumption:realizability} 
The teacher model $\PP(x|y,s) = f^\star(x)^\top g^\star(y,s)$ satisfies $f^\star \in \Fcal$ for a \emph{known} model class $\Fcal$. Moreover, all teacher models parametrized by feature maps $f \in \Fcal$ are valid distributions, i.e., $f^\top(X)^\top g^\star(y, s) \in \Delta(\Xcal)$, for any $y \in \Ycal$ and $s \in \Scal$.
\end{restatable}

We also assume the feature maps in $\Fcal$ are bounded. 
\begin{restatable}{assumption}{assumptionboundedfeature}\label{assumption:bounded_feature_maps} 
There exists a constant $B > 0$ such that for any $f \in \Fcal$ we have $\sup_{x \in \Xcal} \nbr{f(x)} \leq B$ and $\sup_{y\in \Ycal, s \in \Scal} \| g^\star(y,s)\| \leq B $.    
\end{restatable}

Our main theoretical result is a high probability upper bound for the regret of Algorithm~\ref{alg:known_gstar}. 

\begin{restatable}{theorem}{regretboundalgorithm}[Regret bound of $\algname$]\label{thm:regret-upper-bound}
When~\pref{assumption:realizability} and~\pref{assumption:bounded_feature_maps} hold and $\lambda_t = 1/t$ the regret of~\pref{alg:known_gstar} satisfies
\begin{align}
  \Reg(T)   = &\Ocal\rbr{B \sqrt{Td \log(1 + TB)}+\sqrt{Td \log(T|\Fcal|/\delta) \log(1 + TB)}}
\end{align}
with probability at least $1-3\delta$ for all $T \in \NN$.
\end{restatable}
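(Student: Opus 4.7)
The plan is to follow the standard optimism-under-uncertainty template for low-rank bandits, replacing the usual least-squares confidence set with one built from the conditional maximum-likelihood estimator. The argument proceeds in four steps: MLE concentration for $\fhat_t$, conversion of the resulting TV bound into a $\widehat{\Sigma}_t$-norm bound on the parameter error, a per-round regret bound via optimism, and summation via an elliptic-potential lemma.

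For the first step, I would apply a standard conditional-MLE concentration inequality (\`a la Zhang, or of the type used in the low-rank MLE analysis of \citet{agarwal2020flambe}) to get, with probability at least $1-\delta$ and simultaneously for every $t \in [T]$,
\begin{equation*}
  \sum_{\ell=1}^{t-1} \TV{\fhat_t(\cdot)^\top g^\star(y_\ell, s_\ell) - f^\star(\cdot)^\top g^\star(y_\ell, s_\ell)}^2 \;\le\; C_1 \log(t|\Fcal|/\delta).
\end{equation*}
Realizability (\pref{assumption:realizability}) is essential here: it puts $f^\star \in \Fcal$ so that the usual log-covering/Freedman argument applies to the likelihood-ratio martingale $\ln \fhat_t(x'_\ell)^\top g^\star(y_\ell, s_\ell) - \ln f^\star(x'_\ell)^\top g^\star(y_\ell, s_\ell)$ along the protocol, and it guarantees that the candidate conditional densities are genuine distributions.

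Next, I would convert this into a parameter-error bound. Because \pref{assumption:realizability} forces both $\fhat_t(\cdot)^\top g$ and $f^\star(\cdot)^\top g$ to lie in $\Delta(\Xcal)$ when $g = g^\star(y,s)$, every coordinate of their difference is controlled by the TV distance: $|(\fhat_t(x) - f^\star(x))^\top g| \le 2\,\TV{\fhat_t(\cdot)^\top g - f^\star(\cdot)^\top g}$ for every $x \in \Xcal$. Fixing $x = x_t$, squaring, summing over $\ell < t$, and adding the regularization $\lambda_t \nbr{\fhat_t(x_t) - f^\star(x_t)}^2 \le 4\lambda_t B^2$ (from \pref{assumption:bounded_feature_maps}) yields $\nbr{\fhat_t(x_t) - f^\star(x_t)}_{\widehat{\Sigma}_t}^2 \le C_2 (\log(t|\Fcal|/\delta) + \lambda_t B^2)$, which is the content of the referenced \pref{lem:estimation_sigma_t_f_main}. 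Cauchy--Schwarz in the $\widehat{\Sigma}_t$-geometry then gives, pointwise in $y$, the two-sided confidence statement $|\fhat_t(x_t)^\top g^\star(y,s_t) - f^\star(x_t)^\top g^\star(y,s_t)| \le b_t(y, s_t)$ for the algorithm's choice of $C'$. With this in hand, optimism gives the per-round bound $\max_y \PP(x_t \mid y, s_t) - \PP(x_t \mid y_t, s_t) \le 2 b_t(y_t, s_t)$ via the usual add-and-subtract using $y^\star_t \in \arg\max_y f^\star(x_t)^\top g^\star(y, s_t)$ and the greedy choice of $y_t$. Summing over $t$, using $\sqrt{\lambda_t}\le 1$ to combine the two pieces of the bonus, Cauchy--Schwarz in $t$, and the elliptic-potential lemma $\sum_{t=1}^T \nbr{g^\star(y_t,s_t)}^2_{\widehat{\Sigma}_t^{-1}} \le 2d\log(1+TB^2/\lambda_T)$ produces the stated $\Ocal(B\sqrt{Td\log(1+TB)} + \sqrt{Td\log(T|\Fcal|/\delta)\log(1+TB)})$ bound. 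A union bound over the MLE event together with the two sides of the confidence bound absorbs the overall failure probability into $3\delta$.

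The main obstacle I expect is Step 2: the transfer from the distributional MLE/TV statement over $\Xcal$ to a $d$-dimensional $\widehat{\Sigma}_t$-norm bound on the parameter error $\fhat_t(x_t) - f^\star(x_t)$. It works only because realizability keeps both $\fhat_t(\cdot)^\top g$ and $f^\star(\cdot)^\top g$ inside the simplex, so that the $\ell_1$-type TV distance dominates every coordinate of the parameter error for every fixed $x$; without \pref{assumption:realizability}, one would need considerably more delicate change-of-measure or $\ell_\infty$-bracketing arguments of the sort used in low-rank MDP analysis, and the clean separation of the $f^\star$ and $g^\star$ factors on which the elliptic-potential step relies would break down.
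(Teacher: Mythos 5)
Your proposal follows essentially the same route as the paper: an anytime MLE guarantee, conversion of the TV control into a bound on $\nbr{f^\star(x)-\fhat_t(x)}_{\widehat\Sigma_t}$, pointwise optimism via Cauchy--Schwarz in the $\widehat\Sigma_t$ geometry, and an elliptic-potential summation of the bonuses. Two places where your write-up is looser than the paper's argument are worth flagging. First, you state the MLE guarantee as a bound on $\sum_{\ell<t}\TV{\fhat_t(\cdot)^\top g^\star(y_\ell,s_\ell)-f^\star(\cdot)^\top g^\star(y_\ell,s_\ell)}^2$ at the \emph{realized} pairs $(y_\ell,s_\ell)$; the guarantee the paper imports (Theorem 21 of \cite{agarwal2020flambe}, restated as \pref{propn:mle}) controls this sum only in conditional expectation over $y\sim\PP_\ell$, and the paper bridges from expected to realized errors with an anytime Freedman inequality (\pref{lem:freedman_anytime_simplified}) applied uniformly over $\Fcal$ --- this bridging is the actual content of \pref{lem:estimation_sigma_t_f_main} beyond Cauchy--Schwarz. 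Your step goes through if you instead invoke a Zhang-style realized-covariate Hellinger/TV bound, but as written, leaning on the FLAMBE-type statement, it silently drops that martingale step. Second, since $\lambda_t=1/t$ varies with $t$, $\widehat\Sigma_t$ is not of the fixed-regularizer form assumed by the potential lemma; the paper first passes to $\widetilde\Sigma_t$ with fixed regularizer $\lambda_T$ using $\lambda_t\ge\lambda_T$ and \pref{prop:psd_order}, and only then applies \pref{prop:determinant_result}. Neither point changes the overall structure or the final rate, but both need to be made explicit for the proof to be complete.
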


The analysis of \pref{alg:known_gstar} in~\pref{thm:regret-upper-bound} is based on the principle of optimism. For any $(s,x,y) \in \Scal \times \Xcal \times \Ycal$ thequantity $\fhat_t(x)^\top g^\star(y,s) + b_t(y,s) $ can be understood as an estimator for the value of $\PP(x|y,s)$ where the corrective bonus $b_t(y,s)$ takes into account the accuracy of the empirical estimator $\widehat{\PP}_t(x|y,s) = \fhat_t(x)^\top g^\star(y,s)$. By definining the bonus function $b_t : \Ycal \rightarrow \mathbb{R}$ as an appropriately scaled multiple of $\| g^\star(y,s) \|_{\widehat{\Sigma}_t^{-1}}$ it overestimates $\PP(x|y,s)$. That is, for all $s \in \Scal$, $x \in \Xcal$, $y \in \Ycal$ and $t \in \mathbb{N}$,
\begin{equation}\label{eqn:optimism_main_pointwise}
\PP(x|y,s) \leq  \widehat{\PP}_t(x|y,s)   + b_t(y,s) 
\end{equation}
with probability at least $1-2\delta$. When~\pref{eqn:optimism_main_pointwise} is satisfied the policy definition of~\pref{line:policy_definition} immediately implies that
\begin{align}\label{eqn:optimism_main_max_policy} 
\max_{y \in \Ycal}    \PP(x | y,s_t) &\leq \max_{ y \in \Ycal} \widehat{\PP}_t(x|y,s_t)    + b_t(y,s_t) = \widehat{\PP}_t(x|y_t,s_t)    + b_t(y_t,s_t) 
\end{align}
To prove~\pref{eqn:optimism_main_pointwise} we develop the following supporting result,
\begin{restatable}{lemma}{mainsupportinglemma}\label{lem:estimation_sigma_t_f_main}
When~\pref{assumption:realizability} and~\pref{assumption:bounded_feature_maps} hold, then with probability at least $1-2\delta$ we have:
\begin{equation}\label{eqn:upper_bound_cov_norm_single_point_main}
\sup_{x \in \Xcal}\nbr{f^\star(x) - \widehat{f}_t(x)}_{ \widehat{\Sigma}_t}  \leq C'\left( \sqrt{\log(t|\Fcal|/\delta)} + \sqrt{\lambda_t}B\right)
\end{equation}
for all $t \in \NN$ simultaneously. 
\end{restatable}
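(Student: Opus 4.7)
The plan is to reduce the sup-norm quantity to a sequential MLE concentration bound via three reductions: expansion of the covariance-induced norm, a pointwise Hellinger-type inequality, and an online MLE bound for conditional densities. Throughout, for fixed $x \in \Xcal$ I write $\Delta_t(x) = f^\star(x) - \widehat{f}_t(x)$.

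First I would expand
\begin{align*}
\nbr{\Delta_t(x)}_{\widehat{\Sigma}_t}^2 = \lambda_t \nbr{\Delta_t(x)}^2 + \sum_{\ell=1}^{t-1}\rbr{\Delta_t(x)^\top g^\star(y_\ell, s_\ell)}^2.
\end{align*}
The first term is at most $4\lambda_t B^2$ by~\pref{assumption:bounded_feature_maps}, which accounts for the $\sqrt{\lambda_t}B$ piece of the target bound. For the second term, realizability (\pref{assumption:realizability}) gives the identity $\Delta_t(x)^\top g^\star(y_\ell, s_\ell) = \PP(x \mid y_\ell, s_\ell) - \widehat{\PP}_t(x \mid y_\ell, s_\ell)$, converting the geometric quantity into a sum of squared pointwise probability errors along the observed trajectory.

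Next I would apply the pointwise Hellinger inequality: for any two distributions $P, Q \in \Delta(\Xcal)$ and any $x \in \Xcal$,
\begin{align*}
\rbr{P(x) - Q(x)}^2 = \rbr{\sqrt{P(x)} + \sqrt{Q(x)}}^2\rbr{\sqrt{P(x)} - \sqrt{Q(x)}}^2 \le 4\rbr{\sqrt{P(x)} - \sqrt{Q(x)}}^2 \le 8\,H^2(P, Q),
\end{align*}
since $\sqrt{P(x)} + \sqrt{Q(x)} \le 2$ and each squared-root gap is dominated by twice the total Hellinger distance. Crucially, this bound does not depend on $x$, so the supremum over $x$ passes inside the sum to give
\begin{align*}
\sup_{x \in \Xcal}\sum_{\ell=1}^{t-1}\rbr{\PP(x \mid y_\ell, s_\ell) - \widehat{\PP}_t(x \mid y_\ell, s_\ell)}^2 \le 8\sum_{\ell=1}^{t-1}H^2\rbr{\PP(\cdot \mid y_\ell, s_\ell),\,\widehat{\PP}_t(\cdot \mid y_\ell, s_\ell)}.
\end{align*}

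It remains to bound the cumulative Hellinger distance by the standard sequential MLE concentration inequality for finite classes: with probability at least $1-\delta$, $\sum_{\ell=1}^{t-1} H^2(\PP(\cdot \mid y_\ell, s_\ell),\widehat{\PP}_t(\cdot \mid y_\ell, s_\ell)) \le C\log(|\Fcal|/\delta)$ for some absolute constant $C$. Combining the three pieces, taking a square root, and performing a final union bound over $t \in \NN$ (which replaces $|\Fcal|/\delta$ by $t|\Fcal|/\delta$ inside the logarithm) yields the claimed bound with an appropriate constant $C'$. The main obstacle is this last step: because $(y_\ell, s_\ell)$ are chosen adaptively from past data and $\widehat{f}_t$ is refit each round, the observations are not i.i.d.\ and the classical MLE deviation argument does not apply directly. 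The remedy is a martingale version of the MLE concentration inequality for conditional densities (Ville/Freedman applied to the log-likelihood ratio process), which still delivers an $\Ocal(\log(|\Fcal|/\delta))$ aggregate Hellinger bound uniformly over $\Fcal$; this is the technically delicate ingredient but is by now standard in the low-rank MDP and contextual bandit literature.
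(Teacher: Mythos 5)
Your proof is correct, and it reaches the bound by a genuinely different route than the paper. The paper starts from the same expansion $\sup_x\nbr{f^\star(x)-\fhat_t(x)}^2_{\widehat\Sigma_t}\le \lambda_t\sup_x\nbr{f^\star(x)-\fhat_t(x)}_2^2+\sup_x\sum_{\ell<t}\rbr{(f^\star(x)-\fhat_t(x))^\top g^\star(y_\ell,s_\ell)}^2$ and the same $4\lambda_t B^2$ bound on the regularization term, but it then controls the data-dependent sum in two stages: it bounds the conditional second moment of the per-round sup-squared error by $16$ times the expected squared TV distance, applies the anytime Freedman inequality (\pref{lem:freedman_anytime_simplified}) with a union bound over $\Fcal$ to relate the realized sum to these conditional expectations, and finally invokes the MLE guarantee of \pref{propn:mle}, which bounds $\sum_{\ell<t}\EE_{y\sim\PP_\ell}\sbr{\TV{\fhat_t(\cdot)^\top g^\star(y,s_\ell)-f^\star(\cdot)^\top g^\star(y,s_\ell)}^2}$ \emph{in expectation over} $y\sim\PP_\ell$; this two-event structure is why the paper pays $1-2\delta$ and has to optimize the Freedman parameter $\eta$. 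You instead dominate each realized term uniformly in $x$ via $(P(x)-Q(x))^2\le 8H^2(P,Q)$ and then appeal to an \emph{on-sample} cumulative Hellinger MLE bound at the observed covariates $(y_\ell,s_\ell)$ — note this is a different concentration ingredient from \pref{propn:mle}, not a restatement of it. That realized-covariate version is indeed standard and valid here (Ville's inequality applied to the exponential likelihood-ratio supermartingale works for adaptively chosen covariates because $(y_\ell,s_\ell)$ is measurable before $x'_\ell$ is drawn, and the union over $t$ gives the $\log(t|\Fcal|/\delta)$ factor), and your use of it is legitimate because \pref{assumption:realizability} guarantees every $f\in\Fcal$ induces a valid distribution, so the Hellinger distance is well defined for $\fhat_t$. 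What your route buys is a cleaner argument — no Freedman step, no $\eta$-tuning, and only one failure event (so in fact probability $1-\delta$ rather than $1-2\delta$); what it costs is that the key MLE lemma you invoke is not proved in the paper and would need to be stated and cited (or proved) precisely rather than gestured at as "standard."
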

This result provides a bound for the maximum error in the estimation of $f^\star(x)$ as measured by the data norm $\| \cdot \|_{\widehat{\Sigma}_t}$. It will prove crucial in bounding the error of the empirical models $\widehat{\PP}_t(x|y,s)$. The detailed version of this result can its proof can be found in~\pref{lem:estimation_sigma_t_f} in~\pref{app:upper-bound}. %

We denote as $\Ecal$ the event that Equation~\ref{eqn:upper_bound_cov_norm_single_point_main} holds. In this case, we can upper bound the prediction error of the empirical model $\widehat{\PP}_t(x|y,s) =  \fhat_t(x)^\top g^\star(y,s)$ for all $(s,x, y) \in \Scal \times \Xcal \times \Ycal$. 

\begin{align}
\abr{\widehat{\PP}_t(x|y,s) - \PP(x|y,s)} &= \left| \rbr{f^\star(x) - \fhat_t(x)}^\top g^\star(y,s) \right|  
 &\stackrel{(i)}{\leq}  \nbr{f^\star(x)- \fhat_t(x )}_{\widehat{\Sigma}_t}  \nbr{g^\star(y,s)}_{ \widehat{\Sigma}_t^{-1}}
&\stackrel{(ii)}{\leq}  b_t(y,s) \label{eqn:bounding_prediction_error}
\end{align}

where $(i)$ holds because for all $v,w \in \mathbb{R}^d$ and invertible $\Sigma \in \mathbb{R}^{d\times d}$, the Cauchy-Schwartz inequality implies $\langle v, w\rangle = \langle \Sigma^{1/2} v , \Sigma^{-1/2} w \rangle \leq \| v \|_{\Sigma} \| w \|_{\Sigma^{-1}}$ and $(ii)$ by upper bounding $\nbr{f^\star(x)- \fhat_t(x )}_{\widehat{\Sigma}_t} $ using the RHS of Equation~\ref{eqn:upper_bound_cov_norm_single_point_main}. 

These are the necessary ingredients to finalize our sketch of~\pref{thm:regret-upper-bound}. When $\Ecal$ holds the following inequalities are satisfied,
\begin{align*}
&\Reg(T) = \sum_{t=1}^T \rbr{\PP(x_t \mid \pi^\star(x_t), s_t) - \PP(x_t \mid \pi_t(x_t), s_t)}\\
&\qquad\stackrel{(a)}{\leq} \sum_{t=1}^T \fhat_t(x)^\top g^\star(y_t, s_t) - f^\star(x_t)^\top g^\star(y_t, s_t)+ b_t(y_t, s_t)\\
&\qquad\stackrel{(b)}{\leq}  \sum_{t=1}^T 2b_t(y_t, s_t)
\end{align*}
where $(a)$ is a consequence of Optimism (\pref{eqn:optimism_main_max_policy}). And inequality~$(b)$ of the prediction error bound from~\pref{eqn:bounding_prediction_error}. What we have managed to achieve at this point is to upper bound the regret by a sum of estimation errors along the features of the responses played by the algorithm at each time-step. Finally, substituting the definition of the bonus terms and invoking a standard sum of inverse norms bound from the linear bandits literature (see for example Proposition~$3$ in~\cite{pmlr-v130-pacchiano21a} and Proposition~\ref{prop:determinant_result} in Appendix~\ref{appendix::useful_lemmas}) 
\begin{align*}
\sum_{t=1}^T b_t(y_t) &\leq \mathcal{O}\left( \sqrt{\log(T |\Fcal| / \delta ) } \sum_{t=1}^T \| g^\star(y_t, s_t)\|_{\widehat{\Sigma}_t^{-1}} \right)\leq    \mathcal{O}\left(  \sqrt{ Td \log(T|\mathcal{F}| /\delta) \log(1+TB)  }   \right)
\end{align*}
This finalizes the proof sketch of Theorem~\ref{thm:regret-upper-bound}. These results rely on the assumption that $g^\star $ is known. Removing that assumption yields a substantially harder problem as it makes it more difficult to leverage the linearity structure. Although this scenario can be dealt with by deriving algorithms and bounds depending on statistical capacity measures such as the eluder dimension~\citep{russo2013eluder} for the combined $f(x)^\top g(y,s)$ model class we leave the derivation of a sharper analysis of this setting for future work.

\section{Empirical Study}

We evaluate $\algname$ in two settings. The first is a synthetic task that satisfies the low-rank teacher setting and all our assumptions and is designed to provide a proof of concept of $\algname$. The second setting is a grounded setting with real images, natural language instructions, and where the teacher model is not low-rank. Our goal with these experiments is not to present challenging settings for exploration, but to show how various components of $\algname$ can be implemented empirically. Our second experiment also evaluates whether insights from $\algname$ carry over to more realistic settings even where our assumptions are violated.

\subsection{Evaluating on a Synthetic Task}

\paragraph{Environment.} For a given intrinsic dimension $d$, instruction size $|\Xcal|$ and response size $|\Ycal|$, we randomly initialize two matrices $F \in \RR^{|\Xcal| \times d}$ and $G \in \RR^{d \times |\Ycal|}$. We ignore the context in this setting by defining $\Scal$ as a singleton $\{s_0\}$. We define $\Xcal = \sbr{|\Xcal|}$ and $\Ycal = \sbr{|\Ycal|}$ and so an instruction $x$ and a response $y$ are positive integers. We define these matrices by first initializing them with values sampled iid with standard Gaussian distribution. We then take their exponent and divide by a temperature coefficient $\tau$. We then normalize $F$ and $G$ row-wise such that $FG \in \RR^{|\Xcal| \times |\Ycal|}$ is a stochastic matrix whose columns sum to 1. For a given $(x, y) \in \Xcal \times \Ycal$, we view the matrix entry $(FG)_{xy}$ as denoting the value of the teacher distribution $\PP(X=x \mid Y=y, S=s_0)$. We can view the $x^{th}$ row of $F$ and the $y^{th}$ column of $G$ as denoting $f^\star(x)$ and $g^\star(y)$ respectively.

\paragraph{Baselines.} We evaluate the following baselines. \textit{Random}: 
the agent takes uniformly random actions. \textit{$\epsilon$-Greedy}: the agent performs maximum-likelihood estimation on the historic data to learn an estimate $\fhat_t$ similar to $\algname$; however, unlike $\algname$, the exploration is not performed using elliptic bonus but using $\epsilon$-greedy, where with $\epsilon$ probability a random action is taken and with the remaining probability, we take the greedy action $\arg\max_{y \in \Ycal} \fhat_t(x_t)^\top g^\star(y)$. \textit{Greedy}: This is same as $\epsilon$-Greedy with $\epsilon=0$ and only exploits based on historic data. We tune the hyperparameters $\lambda$ and $C'$ for $\algname$ and $\epsilon$ for $\epsilon$-greedy using grid search.

\begin{figure}[!h]
    \centering
    \includegraphics[scale=0.55]{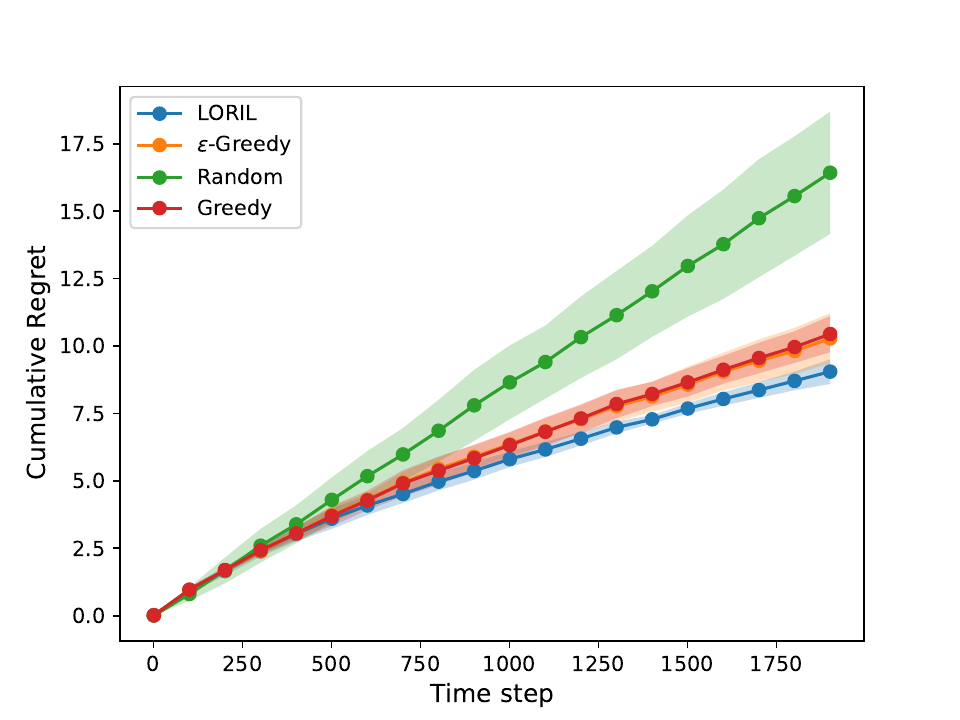}
    \caption{Comparison of $\algname$ against baselines on the controlled task. We run each baseline 3 times and report the average. The shaded areas show the standard deviation.}
    \label{fig:first-exp-synthetic}
\vspace{-.4cm}
\end{figure}

\paragraph{Model and Optimization.} We model $f \in \Fcal$ as $f(x) = \rbr{\frac{\exp(\theta_{xi})}{\sum_{x' \in \Xcal} \exp(\theta_{x'i})}}^d_{i=1}$, where $\rbr{\theta_{xi}}_{x \in \Xcal, i \in [d]}$ are the parameters that we train. For any $f \in \Fcal$, we can verify that $f(x)^\top g^\star(y)$ is a valid conditional distribution over $x$ given $y$. We perform maximum likelihood estimation using Adam optimization.

\paragraph{Results.} \pref{fig:first-exp-synthetic} shows cumulative regret over time steps for $\algname$ and baselines. We ran each experiment 3 times with different seeds. We select hyperparameters for each algorithm based on the mean final regret. We can see that $\algname$ performs better than all baselines achieving the best regret which is 12.3\% smaller than than the next best baseline. Improvements over the greedy baseline show that exploration helps, whereas improvements over $\epsilon$-greedy show that using elliptic bonus for exploration provides better regret bounds.

\subsection{Evaluation on an Image Selection Task}

We evaluate $\algname$ on an image classification task where the true model does not admit a low-rank decomposition. In reinforcement learning, it has been found that using the elliptic bonus for exploration is helpful in real-world settings where low-rank assumption doesn't hold~\cite{henaff2022exploration}. Our goal in this subsection is to test if a similar result holds for our setting.

\begin{figure}[!h]
    \centering
    \includegraphics[scale=0.55]{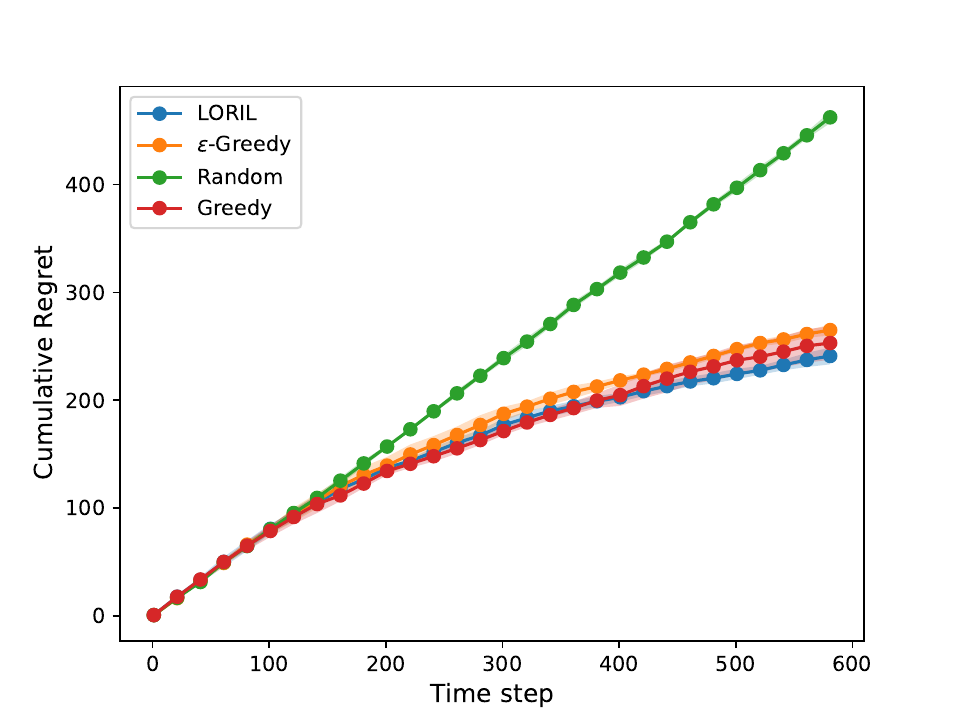}
    \caption{Results on the image classification task. We run each baseline 5 times and report the average performance. The shaded areas show the standard deviation.}
    \label{fig:second-exp}
    \vspace{-.4cm}
\end{figure}

\paragraph{Environment.} The instruction space $\Xcal$ is in natural language where for a given $x \in \Xcal$, we denote the $i^{th}$ token by $x_i$. The agent has an action space with $|\Ycal|=K$ actions. In each round, the world assigns an image of an object to each action, and the agent is given a natural language instruction describing the image that the agent should select. The agent has a non-trivial context $s \in \Scal$ that contains the identity of the object assigned to each action in the given round. No two actions have the same image but the image associated with each action can change across rounds. We sample an object by picking an object of a given type and a given color from a set of types and colors. The instruction space $\Xcal$ is in natural language as in our motivating example in~\pref{fig:flowchart}. We use a set of templates to generate instructions for describing a given image using the object type and color.

\paragraph{Model.} We model a function class $\Hcal = \{h: \Ycal \times \Scal \rightarrow \Delta(\Xcal)\}$ using a deep neural network. Given an action $y$ and context $s$, we encode the image associated with the action with an encoding $g^\star(y, s) \in \RR^d$. We model $g^\star$ as a 3-layer convolutional neural network with LeakyReLu activation. In this setting, we don't assume that the environment provides the $g^\star$. Instead, we train $g^\star$ using an autoencoder objective and a set of offline images sampled from the environment. Alternatively, we could have used a pre-trained image representation model such as ResNet~\citep{he2016deep} or CLIP~\citep{radford2021learning}.

We use the encoding $g^\star(y, s)$ to generate a distribution over texts $x = (x_1, \cdots, x_n)$ using a two-layer Gated Recurrent Unit (GRU). Specifically, we apply a fully connected layer to $g^\star(y, s)$ to reshape it to an appropriate size and use it to initialize the hidden state of the GRU for all layers.

\paragraph{Results.} \pref{fig:second-exp} shows the results. Similar to our previous experiment, $\algname$ performs better than baselines, achieving 4.8\% less regret than the next best baseline, even though the setting does not admit a low-rank structure. We also note that these tasks were not designed to present a challenging scenario for exploration, and consequently the gains relative to baselines are smaller.

\section{Related Work}
\label{sec:related-work}

\paragraph{Provably-efficient Interactive Learning.}  The ubiquitous nature of interactive learning has resulted in significant attention devoted to its theoretical understanding. \pref{proto:protocol} superficially resembles a contextual bandit problem but stands in contrast with the scenario where the learner receives a (possibly noisy) reward signal after taking an action in a given context. The key difference is that while in the contextual bandit setting the feedback equals an unbiased sample of the reward corresponding to the arm and context, in our setting the feedback is produced from a conditional distribution of instructions that does not immediately relate to the reward. Thus, it is not possible to immediately adapt a contextual bandit algorithm to provide regret bounds for \pref{proto:protocol}. There is a vast literature dedicated to developing sublinear regret algorithms for contextual bandit problems. Early efforts to incorporate contextual information into bandit problems led to the development of algorithms such as LinUCB~\citep{chu2011contextual}, OFUL~\citep{abbasi2011improved}, and Linear Thompson Sampling~\citep{agrawal2013thompson}, for the setting when there is a linear relationship between the context and the reward.  %
A long line of work has also focused on studying guarantees for imitation learning~\citep{ross2011reduction,rashidinejad2021bridging}, and policy optimization~\citep{kearns2002near,auer2006logarithmic,azar2017minimax,foster2021statistical}. More recently, there has been a focus on developing statistically efficient RL algorithms with function approximation~\citep{misra2020kinematic,jin2021bellman,foster2021statistical}. Our work focuses on provable learning similar to these methods and uses similar tools for analysis, but focuses on a novel learning setting with a different type of feedback than IL and RL.

\paragraph{Low-rank Interactive Learning.} Low-rank models have been studied in bandit settings~\citep{abbasi2011improved}, contextual bandit settings~\citep{chu2011contextual}, and in more general multi-step reinforcement learning~\citep{jin2020provably,agarwal2020flambe}. One of the appeals of low-rank models is that they can generalize tabular MDPs and provide a way to study function approximation settings which is standard in empirical studies. This is also our motivation for studying low-rank models. Further, low-rank models are one of the most expressive settings for which both statistically and computationally efficient algorithms exist.

\paragraph{Learning using Hindsight Feedback.} Several different works have found it advantageous to use hindsight feedback to convert a failed example into a positive example by relabeling it with a different goal (or in our case instruction)~\citep{andrychowicz2017hindsight,li2020generalized,nair2018overcoming}. These approaches typically solve goal-conditioned RL where a failed trajectory is labeled with its final state as the goal. However, these approaches focus on empirical performance and do not provide regret bounds.

\paragraph{Instruction Following.} The task of developing agents that can follow natural language instructions has received significant attention since the early days of AI~\citep{winograd1972understanding}. Several approaches have developed methods that train these systems using imitation learning~\citep{mei2016listen} and reinforcement learning~\citep{misra2018mapping,hill2021grounded}. Training agents with hindsight instruction labeling has been previously explored for instruction following in~\cite{nguyen2021interactive,fried2018speaker}. The main focus of these results is on empirical performance and they either provide no theoretical analysis, or in the case of~\cite{nguyen2021interactive} only provide asymptotic analysis. In contrast, we provide the first finite-sample regret bounds for learning from instruction labeling.

\section{Conclusion}

In this work we define a \emph{formal} interactive learning setup for hindsight instruction and initiate its theoretical understanding. Among other things we present a lower bound indicating that hindisght instruction learning in the general case can be statistically intractable, thus implying the necessity of imposing structural conditions for statistically efficient learning with hindsight feedback. We present an algorithm $\algname$ that has no-regret when the underlying teacher distribution has low-rank. The regret of $\algname$ scales $\widetilde{\Ocal}\rbr{\sqrt{T}}$ with the horizon and only depends on the rank of the distribution and does not depend on the size of the agent's response space or instruction space. We finalize our work with an experimental demonstration of $\algname$ in a variety of synthetic and grounded scenarios. This work represents a first exploration of the hindsight instruction setup and therefore many exciting research directions remain open. Chief among them is to design provably efficient algorithms for hindsight instruction under less restrictive function approximation assumptions and that are also computationally efficient. We foresee that the algorithmic framework introduced by the Decision Estimation Coefficient literature ~\citep{foster2023tight,foster2021statistical} can serve as the basis of the development of algorithms for hindsight instruction that are both computationally tractable and statistically efficient and that can lead to practical impact in scenarios such as training language models and robotics.

\section*{Acknowledgements}
We thank Dylan J. Foster, Nan Jiang, Akshay Krishnamurthy, and John Langford for interesting discussions. AP's participation in this work was supported in part by funding from the Eric and Wendy Schmidt Center at the Broad Institute of MIT and Harvard.

\newpage
\bibliographystyle{plainnat}
\bibliography{arxiv}

\newpage
\appendix
\onecolumn

\section{Lower Bound Proofs}\label{section::lower_bound_appendix}

\lemmalowerbound*

\begin{proof}
Throughout the proof we'll use the notation $\epsilon = \sqrt{K/T}$ so that problem $W_i$ has distribution $\PP(X, S) = \mathrm{Uniform}((A, s_o),(B, s_o))$ and 
\begin{equation*}
    \PP_{W_i}(X | y_j) =   1/2 + \epsilon \cdot \mathbf{1}(j = i)\cdot(1-2\cdot\mathbf{1}(X = B)) .
\end{equation*}

Let's start by defining the empty problem $W_\emptyset$ as
\begin{align*}
\PP(X, S) &= \mathrm{Uniform}((A, s_o),(B, s_o))\\
\mathbb{P}(A | y_i) &= \mathbb{P}(B | y_i) = 1/2, \quad \forall i
\end{align*}

Let's consider an arbitrary algorithm for $\mathbb{A}$ for learning with hindsight labeling and consider its interaction with problem $W$. We'll use the notation $\mathbb{P}_{W, \mathbb{A}}$ and $\mathbb{E}_{W, \mathbb{A}}$ to denote the measure and expectations induced by problem $W$ and algorithm $\mathbb{A}$. 

First, we consider algorithm $\mathbb{A}$'s interaction with problem $W_\emptyset$. 

Let's consider an arbitrary algorithm for $\mathbb{A}$ for learning with hindsight labeling and its interactions with $W_\emptyset$. 

Let $n_i(T) = \sum_{t=1}^T \mathbf{1}( y_t = i) $ denote the (random) number of times the learner selected $y_t = i$ from time $1$ to $T$.

We'll use the notation $\ell_1, \cdots, \ell_K$ to denote the ordering of indices in $[K]$ such that,
\begin{equation*}
    \mathbb{E}_{\PP_{W_\emptyset, \mathbb{A}}}\left[n_{\ell_1}(T)\right]  \leq \cdots \leq     \mathbb{E}_{\PP_{W_\emptyset, \mathbb{A}}}\left[n_{\ell_K}(T)\right] .
\end{equation*}
Notice that for all $j \leq \lfloor K/2 \rfloor$,
\begin{equation}\label{equation::expected_num_pulls}
  \mathbb{E}_{\PP_{W_\emptyset, \mathbb{A}}}\left[n_{\ell_1}(T)\right]  \leq \frac{2T}{K}, \forall \ell \leq \lfloor K/2 \rfloor
\end{equation}

Let's start by noting that for any $W_i$,  the KL distance between $\mathbb{P}_{W_\emptyset, \mathbb{A}}$ and $\mathbb{P}_{W_i, \mathbb{A}}$ for $i \in \{ \ell_j \}_{j \in \left[\lfloor K/2 \rfloor\right]}$ satisfies the bound

\begin{align*}
\mathrm{KL}\left( \mathbb{P}_{W_\emptyset, \mathbb{A}} \parallel \mathbb{P}_{W_i, \mathbb{A}} \right) &= \mathbb{E}_{W_\emptyset, \mathbb{A}}\left[ \sum_{t=1 }^T \mathrm{KL}( \mathbb{P}_{W_\emptyset}( X' | Y_t ) \parallel \mathbb{P}_{W_i}(X'| Y_t ) ) \right] \\
&= \mathbb{E}_{W_\emptyset, \mathbb{A}}\left[ \sum_{t=1 }^T \mathrm{KL}( \mathbb{P}_{W_\emptyset}( X' | Y_t = i ) \parallel \mathbb{P}_{W_i}(X'| Y_t = i ) )\mathbf{1}(Y_t = i) \right] \\
&=  \mathbb{E}_{W_\emptyset, \mathbb{A}}\left[ n_i(T)\right] \mathrm{KL}\left( \mathrm{Ber}(1/2) \parallel \mathrm{Ber}(1/2-\epsilon) \right) \\
&\stackrel{(i)}\leq  \mathcal{O}\left( \frac{T\epsilon^2}{K} \right)
\end{align*}
Where $(i)$ is implied by inequality~\ref{equation::expected_num_pulls} for all $\ell_j$ with $j \in \left[\lfloor K/2 \rfloor\right]$. 

Define $\mathcal{E}_i = \left\{  \sum_{t=1}^T \mathbf{1}( Y_t = i, X_t = A) +  \mathbf{1}( Y_t \neq i, X_t = B) \geq \frac{7T}{8} \right \} $. When interacting with world $W_i$ the event $\mathcal{E}_i$ corresponds to the event where $\mathbb{A}$ makes the right decisions in at least $3T/4$ time-steps. 

The complement event $\mathcal{E}_i^c =\{  \sum_{t=1}^T \mathbf{1}( Y_t = i, X_t = A) +  \mathbf{1}( Y_t \neq i, X_t = B) < \frac{7T}{8}  \}$ corresponds to the event where $\mathbb{A}$ makes the correct decisions in at most $7T/8$ time-steps.

By the Huber-Bretagnolle inequality, all $i \in \{ \ell_j \}_{j \in \left[\lfloor K/2 \rfloor\right]}$ satisfy
\begin{align*}
\PP_{W_\emptyset, \mathbb{A}}(\mathcal{E}_i) + \PP_{W_i, \mathbb{A}}(\mathcal{E}_i^c) &\geq \exp\left(- \mathrm{KL}\left( \mathbb{P}_{W_\emptyset, \mathbb{A}} \parallel \mathbb{P}_{W_i, \mathbb{A}} \right)\right)\\
&\geq \mathcal{O}\left( -T\epsilon^2/K \right).
\end{align*}
since $\epsilon = \sqrt{K/T}$, we have 
$$\PP_{W_\emptyset, \mathbb{A}}(\mathcal{E}_i) + \PP_{W_i, \mathbb{A}}(\mathcal{E}_i^c) \geq 1/e \text{ for all $i \in \{ \ell_j \}_{j \in \left[\lfloor K/2 \rfloor\right]}$}.$$

Let $n_X(T) = \sum_{t=1}^T \mathbf{1}( X_t = X) $ denote the (random) number of times the learner selected $X_t = X$ for $X \in \{A,B\}$ from time $1$ to $T$.

We now define $\mathcal{E}_{\mathrm{good}} = \{ n_B(T) \leq \frac{5T}{8}\}$. If $T \geq 256\log(2e)$ Proposition~\ref{lemma::supporting_hoeffding_lemma} applied with $\alpha = 1/8$ and $\delta = 1/2e$ implies that,
\begin{equation*}
\PP_{W_\emptyset}(\mathcal{E}_{\mathrm{good}} )  \geq 1-1/2e 
\end{equation*}

Define $\widetilde{\mathcal{E}}_i= \mathcal{E}_i \cap \mathcal{E}_{\mathrm{good}}$. For all $i \in  \{ \ell_j \}_{j \in \left[\lfloor K/2 \rfloor\right]}$,
\begin{align*}
1/e \leq \PP_{W_\emptyset, \mathbb{A}}(\mathcal{E}_i) + \PP_{W_i, \mathbb{A}}(\mathcal{E}_i^c) &= \PP_{W_\emptyset, \mathbb{A}}(\mathcal{E}_i\cap \mathcal{E}_{\mathrm{good}}^c) +\PP_{W_\emptyset, \mathbb{A}}(\widetilde{\mathcal{E}}_i) +  \PP_{W_i, \mathbb{A}}(\mathcal{E}_i^c)\\
&\leq \PP_{W_\emptyset, \mathbb{A}}( \mathcal{E}_{\mathrm{good}}^c) +\PP_{W_\emptyset, \mathbb{A}}(\widetilde{\mathcal{E}}_i) +  \PP_{W_i, \mathbb{A}}(\mathcal{E}_i^c) \\
&\leq 1/2e + \PP_{W_\emptyset, \mathbb{A}}(\widetilde{\mathcal{E}}_i) +  \PP_{W_i, \mathbb{A}}(\mathcal{E}_i^c)
\end{align*}

And therefore $\PP_{W_\emptyset, \mathbb{A}}(\widetilde{\mathcal{E}}_i) +  \PP_{W_i, \mathbb{A}}(\mathcal{E}_i^c) \geq 1/2e$. Thus,
\begin{equation}\label{equation::sum_all_lower_bounds}
    \sum_{i \in \{ \ell_j \}_{j \in [\lfloor K/2 \rfloor]}}  \PP_{W_\emptyset, \mathbb{A}}(\widetilde{\mathcal{E}}_i) + \PP_{W_i, \mathbb{A}}(\mathcal{E}_i^c) \geq \lfloor K/2 \rfloor /2e
\end{equation}
Notice that when $\widetilde{\mathcal{E}}_i$ it follows that $\sum_{t=1}^T \mathbf{1}(Y_t = i, X_t = A) + \frac{5T}{8}\geq   \sum_{t=1}^T \mathbf{1}( Y_t = i, X_t = A) +  \mathbf{1}( Y_t \neq i, X_t = B)\geq  \frac{7T}{8}$ and therefore $\sum_{t=1}^T \mathbf{1}(Y_t = i, X_t = A) \geq \frac{T}{4}$. This in turn implies that when $\widetilde{\mathcal{E}}_i$ holds then $\sum_{t=1}^T \mathbf{1}( Y_t \neq i, X_t = A) + \mathbf{1}(X_t = B) \leq \frac{3T}{4}$.

Notice that $\widetilde{\mathcal{E}}_i \cap \widetilde{\mathcal{E}}_j = \emptyset$ for all $i \neq j$. This is because for all $j \neq i$, when $\widetilde{\mathcal{E}}_i$ holds,
\begin{equation*}
\sum_{t=1}^T \mathbf{1}(Y_t = j, X_t = A) + \mathbf{1}(Y_t \neq j, X_t = B) \leq \sum_{t=1}^T \mathbf{1}( Y_t \neq i, X_t = A) + \mathbf{1}(X_t = B) \leq \frac{3T}{4} < \frac{7T}{8} 
\end{equation*}
Since $\widetilde{\mathcal{E}}_i \cap \widetilde{\mathcal{E}}_j = \emptyset$ the sum $ \sum_{i \in \{ \ell_j \}_{j \in [\lfloor K/2 \rfloor]}}  \PP_{W_\emptyset, \mathbb{A}}(\widetilde{\mathcal{E}}_i) \leq 1$. Thus Equation~\ref{equation::sum_all_lower_bounds} implies,
\begin{equation*}
    \sum_{i \in \{ \ell_j \}_{j \in [\lfloor K/2 \rfloor]}}  \PP_{W_i, \mathbb{A}}(\mathcal{E}_i^c) \geq \lfloor K/2 \rfloor /2e - 1
\end{equation*}
And therefore there is an index $\hat{i} \in \{ \ell_j \}_{j \in [\lfloor K/2 \rfloor]}$ such that,
\begin{equation*}
 \PP_{W_{\hat i}, \mathbb{A}}(\mathcal{E}_{\hat i}^c)  \geq  \frac{1}{2e} - \frac{1}{\lfloor K/2 \rfloor} \stackrel{(i)}{\geq} \frac{1}{4e}.
\end{equation*}
where inequality $(i)$ holds because $K \geq 8e$. 

when $\mathcal{E}_{\hat i}^c = \{  \sum_{t=1}^T \mathbf{1}( Y_t = i, X_t = A) +  \mathbf{1}( Y_t \neq i, X_t = B) < \frac{7T}{8}  \}$ holds, 
\begin{equation}\label{equation::complement_event_consequences_pulls}
\sum_{t=1}^T \mathbf{1}( Y_t \neq i, X_t = A) +  \mathbf{1}( Y_t = i, X_t = B) \geq \frac{T}{8} 
\end{equation}
also holds. When $\mathcal{E}_{\hat i}^c$ is satisfied and therefore Equation~\ref{equation::complement_event_consequences_pulls} is satisfied, the regret can be lower bounded by $\epsilon T/8 = \frac{\sqrt{KT}}{8}= \Omega(\sqrt{KT})$ since $\epsilon = \sqrt{ K/T}$. We conclude that,
\begin{equation*}
\Reg(T) \geq \frac{\sqrt{KT}}{8}
\end{equation*}
with probability $\mathbb{P}_{W_{\hat{i}}, \mathbb{A}}( \mathcal{E}_{\hat i}^c) \geq \frac{1}{4e} $. 
\end{proof}

\expectedregretlowerbound*

\begin{proof}

Lemma~\ref{lemma::lower_bound} implies there exists at least one problem $W_{\hat{i}}$ such that $\Reg(T)  \geq \frac{\sqrt{KT}}{8}$ with probability at least $1/4e$. Let's call this event $\mathcal{E}$. Therefore since $\sum_{t=1}^T \max_{y \in Y} \PP(  X_t | y, s_o)  - \PP(X_t | Y_t, s_o ) \geq 0$ with probability one, 

\begin{equation*}
   \overline{\Reg}(T) =  \mathbb{E}_{W_{\hat{i}}, \mathbb{A}}\left[ \sum_{t=1}^T \max_{y \in Y} \PP(  X_t | y, s_o)  - \PP(X_t | Y_t , s_o)\right] \geq \mathbb{P}_{W_{\hat{i}}, \mathbb{A}}( \mathcal{E})\cdot \frac{\sqrt{KT}}{8} \geq  \Omega(\sqrt{KT}).
\end{equation*} 
 
\end{proof}

\begin{proposition}\label{lemma::supporting_hoeffding_lemma}
Let $\delta \in (0,1)$, $\alpha \in (0,1/2)$ and $\{X_i\}_{i=1}^T$ be $T$ i.i.d. random variables sampled from $\mathrm{Ber}(1/2)$. It $T \geq \frac{4}{\alpha^2}\log(1/\delta)$ then $\sum_{i=1}^T X_i \leq (\frac{1}{2}+\alpha)T$ with probability at least $1-\delta$.  Similarly if $T \geq \frac{4}{\alpha^2}\log(1/\delta)$ then $\sum_{i=1}^T X_i \geq (\frac{1}{2}-\alpha)T$ with probability at least $1-\delta$. 
\end{proposition}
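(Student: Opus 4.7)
The plan is to invoke Hoeffding's inequality directly, since $X_i \in \{0,1\}$ are bounded i.i.d. random variables with mean $1/2$. First I would recall the standard one-sided form: for any $\alpha > 0$,
\begin{equation*}
\mathbb{P}\!\left( \sum_{i=1}^T X_i \geq \left(\tfrac{1}{2} + \alpha\right)T \right) \leq \exp(-2T\alpha^2),
\end{equation*}
which follows from Hoeffding applied to the centered variables $X_i - 1/2 \in [-1/2, 1/2]$ with range $1$.

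Next I would choose $T$ large enough to make the right-hand side at most $\delta$. Setting $\exp(-2T\alpha^2) \leq \delta$ rearranges to $T \geq \frac{1}{2\alpha^2}\log(1/\delta)$. Since the hypothesis assumes the stronger bound $T \geq \frac{4}{\alpha^2}\log(1/\delta)$, which comfortably implies $T \geq \frac{1}{2\alpha^2}\log(1/\delta)$, the upper tail statement $\sum_{i=1}^T X_i \leq (\tfrac{1}{2}+\alpha)T$ holds with probability at least $1 - \delta$.

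For the second claim, I would apply the symmetric lower-tail Hoeffding bound (or equivalently, invoke the first part to the i.i.d.\ variables $1 - X_i \sim \mathrm{Ber}(1/2)$), yielding
\begin{equation*}
\mathbb{P}\!\left( \sum_{i=1}^T X_i \leq \left(\tfrac{1}{2} - \alpha\right)T \right) \leq \exp(-2T\alpha^2) \leq \delta
\end{equation*}
under the same condition on $T$.

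There is no real obstacle here; the only thing to check is that the $\frac{4}{\alpha^2}$ in the hypothesis is loose enough (it is, by a factor of $8$) so the version of Hoeffding used is immaterial to the conclusion. The proposition is simply a convenient packaging of Hoeffding's inequality for use in the lower bound argument, where it is applied with $\alpha = 1/8$ and $\delta = 1/(2e)$ to control the number of times $X_t = B$.
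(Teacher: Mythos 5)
Your proof is correct and follows essentially the same route as the paper: a direct application of Hoeffding's inequality to the Bernoulli sum (and its symmetric counterpart for the lower tail). The only cosmetic difference is that the paper uses the deviation form $\widehat{S} - T/2 \leq 2\sqrt{T\log(1/\delta)}$ from its martingale Hoeffding lemma, under which the threshold $T \geq \frac{4}{\alpha^2}\log(1/\delta)$ is exactly what is needed, whereas your sharper tail form $\exp(-2T\alpha^2)$ makes that hypothesis loose by a constant factor, as you correctly note.
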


\begin{proof}

Let $\widehat{S}= \sum_{i=1}^T X_i$ be the sum of the outcomes $\{X_i\}_{i \in [T]}$. Hoeffding inequality implies

$$ \widehat{S} - T/2   \leq 2\sqrt{T \log(1/\delta) } $$

with probability at least $1-\delta$. Thus, as long as $2\sqrt{T \log(1/\delta) }  \leq \alpha T$, (i.e. $\frac{4}{\alpha^2} \log(1/\delta) \leq T$) 
we have $\widehat{S} \leq (\frac{1}{2} + \alpha)T$. 

The reverse inequality can be derived using the same argument applied to the inequality $T/2 - \widehat{S} \leq 2\sqrt{T \log(1/\delta)}$ with probability at least $1-\delta$.

\end{proof}

\section{Regret Bounds for $\algname$ in low-rank distribution setting}
\label{app:upper-bound}

In this section, we provide a regret bound for $\algname$. We first enumerate the assumptions.

\assumptionrealizability*

\assumptionboundedfeature*

Let us consider the $t^{th}$ round. The empirical covariance matrix is given by $\widehat\Sigma_t$ where
\begin{equation*}
\widehat{\Sigma}_t = \sum_{l=1}^{t-1} g^*(y_l, s_l)g^\star(y_l, s_l)^\top + \lambda_t \II.
\end{equation*}
for a regularizer value $\lambda_t >0$. It is easy to verify that $\widehat\Sigma_t$ is a positive definite matrix since $\lambda_t \II$ is a positive definite matrix and $C_t = \sum_{l=1}^{t-1} g^*(y_l, s_l)g^\star(y_l, s_l)^\top$ is a positive semidefinite matrix as it is a symmetric matrix and for any $v \in \RR^d$ we have $v^\top C_t v = \sum_{l=1}^{t-1} v^\top g^\star(y_l, s_l) g^\star(y_l, s_l)^\top v = \sum_{l=1}^{t-1} \nbr{g^\star(y_l, s_l)^\top v}^2_2 \ge 0$. As $\widehat\Sigma_t$ is positive definite its inverse $\widehat\Sigma^{-1}_1$ and exists and is also a positive definite matrix.\footnote{This is trivial to show as $v^\top \Sigma^{-1} v = (v^\top \Sigma^{-1})\Sigma (\Sigma^{-1} v) > 0$ if $\Sigma^{-1}v \ne 0$ as $\Sigma$ is positive definite, further $\Sigma^{-1}v = 0 \Leftrightarrow v = 0$. Therefore, if $v \ne 0$, then $v^\top \Sigma^{-1} v > 0$ and vice versa.} Finally, it can be shown that one can also define the square root of the matrix $\Sigma^{1/2}_t$ and that of its inverse $\Sigma^{-1/2}_t$ and these are symmetric and positive definite as well.

Let $\widehat\PP_t(x \mid y,s) = \fhat_t(x)^\top g^\star(y,s)$ be the model estimated in round $t$ by maximum likelihood estimation. Given any $s,x, y \in \Scal \times \Xcal \times \Ycal$, the following important inequality holds,
\begin{align}
\label{eqn:holder_ineq_probabilities_known_gstar}
\abr{\PP(x|y,s) - \widehat{\PP}_t(x|y,s)} &= \abr{\rbr{f^\star(x) - \fhat_t(x)}^\top g^\star(y,s)}, \nonumber\\
&= \abr{\rbr{\Sigma^{1/2}_t\rbr{f^\star(x) - \fhat_t(x)}}^\top \rbr{\Sigma^{-1/2}_t g^\star(y,s)}},\nonumber\\
&\le \sqrt{\rbr{f^\star(x) - \fhat_t(x)}^\top \Sigma^{1/2}_t \cdot \Sigma^{1/2}_t \rbr{f^\star(x) - \fhat_t(x)}} \cdot \sqrt{g^\star(y,s)^\top \Sigma_t^{-1/2} \cdot \Sigma^{-1/2}_t g^\star(y,s)},\nonumber\\
&= \nbr{f^\star(x) - \fhat_t(x)}_{\Sigma_t} \cdot \nbr{g^\star(y,s)}_{\Sigma^{-1}_t},
\end{align}
where the second last step uses Cauchy-Schwarz inequality. This inequality allows us to bound the error in the estimated model for a given $y$ in terms of the error based on historical data given by $\nbr{f^\star(x) - \fhat_t(x)}_{\Sigma_t}$ and the novelty of the given input $\nbr{g^\star(y,s)}_{\Sigma^{-1}_t}$. We want to bound the two terms in RHS of~\pref{eqn:holder_ineq_probabilities_known_gstar}.

First we'll prove the following result,
\begin{lemma}\label{lem:estimation_sigma_t_f} When~\pref{assumption:realizability} and~\pref{assumption:bounded_feature_maps}, then with probability at least $1-2\delta$ we have:
\begin{equation}\label{eqn:upper_bound_cov_norm_single_point}
\sup_{x \in \Xcal}\nbr{f^\star(x) - \widehat{f}_t(x)}_{\widehat \Sigma_t}  \leq  (32C)^{1/4} \sqrt{\log(t|\Fcal|/\delta)} + 2\sqrt{\lambda_t}B
\end{equation}
for all $t \in \NN$ simultaneously. 
\end{lemma}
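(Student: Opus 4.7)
The plan is to combine a standard MLE concentration inequality for conditional densities (applied along the adaptive data stream) with the linearity $\PP(x\mid y,s) = f^\star(x)^\top g^\star(y,s)$, which lets us convert a cumulative density-distance bound into a pointwise $\widehat{\Sigma}_t$-norm bound that is automatically uniform in $x$. The uniformity is essentially free because
\begin{equation*}
\rbr{f^\star(x) - \widehat{f}_t(x)}^\top g^\star(y_\ell, s_\ell) \;=\; \PP(x\mid y_\ell, s_\ell) - \widehat{\PP}_t(x\mid y_\ell, s_\ell)
\end{equation*}
is exactly the pointwise gap between the two conditional densities at $x$, so no $|\Xcal|$-dependent union bound is needed.

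First, under~\pref{assumption:realizability}, a standard MLE concentration inequality for adaptively collected conditional-density data (of the form used, e.g., in \citet{agarwal2020flambe}) gives, for any fixed $t$, with probability at least $1-\delta$,
\begin{equation*}
\sum_{\ell=1}^{t-1} D_H^2\bigl(\widehat{\PP}_t(\cdot\mid y_\ell, s_\ell),\, \PP(\cdot\mid y_\ell, s_\ell)\bigr) \;\leq\; C_0 \log\rbr{|\Fcal|/\delta},
\end{equation*}
where $D_H$ denotes the Hellinger distance. This is proved by a Freedman-type martingale inequality on the log-likelihood ratio combined with a union bound over $f \in \Fcal$. Replacing $\delta$ by $\delta/t^2$ and union-bounding over $t \in \NN$ produces the same inequality, simultaneously for all $t$, with $\log(t|\Fcal|/\delta)$ on the right-hand side.

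Next, for every $x \in \Xcal$ the elementary factorization $|P(x) - Q(x)| = |\sqrt{P(x)} + \sqrt{Q(x)}|\cdot|\sqrt{P(x)} - \sqrt{Q(x)}| \leq 2|\sqrt{P(x)} - \sqrt{Q(x)}|$ gives $(P(x) - Q(x))^2 \leq 8\, D_H^2(P,Q)$. Squaring the linear identity above and summing over $\ell$,
\begin{equation*}
\nbr{f^\star(x) - \widehat{f}_t(x)}_{\widehat{\Sigma}_t - \lambda_t I}^2 \;=\; \sum_{\ell=1}^{t-1} \rbr{\rbr{f^\star(x) - \widehat{f}_t(x)}^\top g^\star(y_\ell, s_\ell)}^2 \;\leq\; 8\, C_0 \log\rbr{t|\Fcal|/\delta},
\end{equation*}
uniformly in $x$, since the right-hand side is $x$-free. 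Adding the regularization contribution $\lambda_t \nbr{f^\star(x) - \widehat{f}_t(x)}^2 \leq 4\lambda_t B^2$ (using~\pref{assumption:bounded_feature_maps}), taking a square root, and applying $\sqrt{a+b} \leq \sqrt{a} + \sqrt{b}$ yields the claim, with the universal numerical constants packaged into the prefactor $(32 C)^{1/4}$ and the regularization term $2\sqrt{\lambda_t}B$.

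The main obstacle is the MLE step itself: because the samples $(y_\ell, s_\ell)$ are chosen adaptively by the algorithm, an i.i.d.\ MLE bound does not apply directly, and one must invoke the standard martingale exponential inequality on the conditional log-likelihood ratio, tracking constants carefully to match the stated $C$. The remaining ingredients---the Hellinger-to-pointwise conversion, the free uniformity in $x$ from the linear parametrization, and the $t^{-2}$ union bound for the uniform-in-$t$ statement---are routine.
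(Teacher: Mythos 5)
Your proof is correct, but it takes a genuinely different route from the paper's. The paper's argument bounds the data-norm error via the expectation-form MLE guarantee (\pref{propn:mle}, adapted from Theorem~21 of \cite{agarwal2020flambe}), which controls $\sum_{\ell}\EE_{y\sim \PP_\ell}\sbr{\TV{\cdot}^2}$ in expectation over the covariate distribution at each round; since $\widehat{\Sigma}_t$ is built from the \emph{realized} pairs $(y_\ell,s_\ell)$, the paper then needs an extra bridging step --- the anytime Freedman inequality applied to $X^{f}_\ell=\sup_x\rbr{(f^\star(x)-f(x))^\top g^\star(y_\ell,s_\ell)}^2$ with the variance self-bounded by the TV terms --- which is where the second $\delta$ and the $\eta$-optimization producing $(32C)^{1/4}$ come from. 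You instead invoke the in-sample version of the adaptive MLE guarantee, i.e.\ a bound on the sum of squared Hellinger distances evaluated at the realized covariates $(y_\ell,s_\ell)$; granting that, the pointwise conversion $(P(x)-Q(x))^2\le 8\,\mathrm{D}^2_{\mathrm{H}}(P,Q)$ (the analogue of the paper's $\|\cdot\|_\infty\le\|\cdot\|_1$/TV step), the $x$-free uniformity from linearity, the $4\lambda_t B^2$ regularization term, and the $\delta/t^2$ union bound give the claim directly, with no Freedman step and in fact on a single event of probability $1-\delta$ rather than $1-2\delta$. The realized-covariate Hellinger guarantee you cite is indeed a standard result, but two caveats: it is \emph{not} the statement of Theorem~21 of \cite{agarwal2020flambe} verbatim (that one is in expectation over the covariate law, which is exactly why the paper needs its Freedman bridge), and its proof is not literally a Freedman-type bound on the log-likelihood ratio --- the log-ratio can be unbounded when some $f\in\Fcal$ assigns vanishing mass to an observed instruction --- but rather the exponential-supermartingale/Chernoff argument applied to $\exp\rbr{-\tfrac12\log\text{-ratio}}$, conditioning on the realized covariate at each round. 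With the correct citation or that short supermartingale argument spelled out, your route is a legitimate and somewhat shorter alternative; the paper's route buys the ability to quote the expectation-form MLE theorem as a black box at the cost of one extra martingale concentration step.
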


\begin{proof} 
\begin{align}
    \label{equation::upper_bound_sum_norms}
   \sup_{x \in \Xcal} \| f^*(x) - \widehat{f}_t(x) \|^2_{\widehat\Sigma_t}  
   &= \sup_{x \in \Xcal} \rbr{\sum_{\ell=1}^{t-1} (f^\star(x) - \fhat_t(x) )^\top g^\star(y_\ell, s_\ell) \cdot g^\star(y_\ell, s_\ell)^\top  (f^\star(x) - \fhat_t(x) ) + \lambda_t \nbr{f^\star(x) - \fhat_t(x)}^2_2}  \notag\\
   &= \sup_{x \in \Xcal} \rbr{\sum_{\ell=1}^{t-1} \rbr{f^\star(x)^\top g^\star(y_\ell, s_\ell) - \fhat_t(x)^\top g^\star(y_\ell, s_\ell)}^2 + \lambda_t \nbr{f^\star(x) - \fhat_t(x)}^2_2}  \notag\\
   &\le \underbrace{\sum_{\ell=1}^{t-1}  \sup_{x \in \Xcal} \rbr{f^\star(x)^\top g^\star(y_\ell, s_\ell) - \fhat_t(x)^\top g^\star(y_\ell, s_\ell)}^2}_\text{$\defeq U_t$ first term} + \lambda_t  \underbrace{\sup_{x \in \Xcal} \nbr{f^\star(x) - \fhat_t(x)}^2_2}_\text{$\defeq V_t$ second term}  \notag
\end{align}
We first bound the second term $V_t$ as:
\begin{align*}
    \sup_{x \in \Xcal} \nbr{f^\star(x) - \fhat_t(x)}^2_2 &= \sup_{x \in \Xcal} \rbr{\nbr{f^\star(x)}^2_2 + \nbr{\fhat_t(x)}^2_2 - 2f^\star(x)^\top \fhat_t(x)}, \\
    &\le \sup_{x \in \Xcal} \nbr{f^\star(x)}^2_2 + \sup_{x \in \Xcal}\nbr{\fhat_t(x)}^2_2 + 2 \sup_{x \in \Xcal} \abr{f^\star(x)^\top \fhat_t(x)},\\
    &\le \sup_{x \in \Xcal} \nbr{f^\star(x)}^2_2 + \sup_{x \in \Xcal}\nbr{\fhat_t(x)}^2_2 + 2\sup_{x \in \Xcal} \nbr{f^\star(x)}_2 \nbr{\fhat_t(x)}_2,\\
    &\le 4B^2,
\end{align*}
where we use the assumption that $\sup_{x \in \Xcal} \nbr{f(x)}_2 \le B$ for any $f \in \Fcal$ and that $f^\star, \fhat_t \in \Fcal$ and the second last step uses Cauchy-Schwarz inequality. 

We now bound the first term $U_t$. For a given $f \in \Fcal$ we define $\Delta^f(x, y_\ell, s_\ell) = \rbr{f^\star(x)^\top g^\star(y_\ell, s_\ell) - f(x)^\top g^\star(y_\ell, s_\ell)}^2$ and $X^f_\ell = \sup_{x \in \Xcal} \Delta^f(x, y_\ell, , s_\ell)$, which allows us to write $U_t = \sum_{\ell=1}^{t-1} X^{\fhat_t}_\ell$. 

We fix $f \in \Fcal$. We have $X^f_\ell \ge 0$ by definition and as $f^\star(x)^\top g^\star(y_\ell, s_\ell), f(x)^\top g^\star(y_\ell, s_\ell) \in [0, 1]$, we also have
\begin{equation}
    X^f_\ell = \sup_{x \in \Xcal} \Delta^f(x, y_\ell, s_\ell) = \sup_{x \in \Xcal}  \rbr{f^\star(x)^\top g^\star(y_\ell, s_\ell) - f(x)^\top g^\star(y_\ell, s_\ell)}^2 \le 1,
\end{equation}

Let $\PP_\ell \in \Delta(\Ycal)$ be the marginal distribution over $y_\ell$ conditioned on $\{x_{\ell'}, s_{\ell'}, y_{\ell'}, x_{\ell'}'\}_{\ell'=1}^{\ell-1} \cup \{  x_\ell, s_\ell\}$ then 
\begin{align*}
 \EE_{y \sim \PP_\ell} \sbr{ \left(X_\ell^f\right)^2 } &= \EE_{y \sim \PP_\ell}\sbr{\sup_{x \in \Xcal} \rbr{f^\star(x)^\top g^\star(y, s_\ell) - f(x)^\top g^\star(y, s_\ell)}^4} \\
 &\le 16 \EE_{y \sim \PP_\ell}\sbr{\TV{f^\star(\cdot)^\top g^\star(y, s_\ell) - f(\cdot)^\top g^\star(y, s_\ell)}^4},\\
 &\le 16 \EE_{y \sim \PP_\ell}\sbr{\TV{f^\star(\cdot)^\top g^\star(y, s_\ell) - f(\cdot)^\top g^\star(y, s_\ell)}^2},
\end{align*}
where we use the fact that $\|\cdot\|_\infty \le \|\cdot\|_1$, that TV distance between two distributions is equal to half of 1-norm distance between them, and that $\TV{\cdot} \le 1$. Thus, using the anytime Freedman's inequality (see~\pref{lem:freedman_anytime_simplified}) and union bound over all $f \in \mathcal{F}$, we get:
\begin{align*}
\sum_{\ell=1}^{t-1} X_\ell^f &\leq \eta \sum_{\ell=1}^{t-1}\EE_{y \sim \PP_\ell} \sbr{ \left(X_\ell^f\right)^2 } + \frac{C \log(t|\Fcal|/\delta)}{\eta} \\
&\leq 16\eta \sum_{\ell=1}^{t-1}\EE_{y \sim \PP_\ell}\sbr{\TV{f^\star(\cdot)^\top g^\star(y, s_\ell) - f(\cdot)^\top g^\star(y, s_\ell)}^2} + \frac{C \log(t|\Fcal|/\delta)}{\eta},
\end{align*}
simultaneously for all $t \in \NN$ and all $f \in \Fcal$ with probability at least $1-\delta$.

In particular by setting $f = \widehat f_t$ this implies,
\begin{equation}\label{eqn:bounding_I_byTV}
U_t = \sum_{\ell=1}^{t-1} X_\ell^{\widehat f_t} \leq 16\eta \sum_{\ell=1}^{t-1} \EE_{y \sim \PP_\ell}\sbr{\TV{f^\star(\cdot)^\top g^\star(y, s_\ell) - \widehat{f}_t(\cdot)^\top g^*(y, s_\ell)}^2} + \frac{C \log(t|\Fcal|/\delta)}{\eta}
\end{equation}
with probability at least $1-\delta$. Finally,~\pref{propn:mle} implies that with probability at least $1-\delta$, 
\begin{equation}\label{eqn:bounding_TV_high_prob_I}
\sum_{\ell=1}^{t-1} \EE_{y \sim \PP_\ell}\sbr{\TV{f^\star(\cdot)^\top g^\star(y, s_\ell) - \fhat_t(\cdot)^\top g^\star(y, s_\ell)}^2}  \leq  2\log(t|\Fcal|/\delta)
\end{equation}
for all $t \in \mathbb{N}$. Therefore a union bound implies that with probability at least $1-2\delta$, combining the bounds of~\pref{eqn:bounding_I_byTV} and~\pref{eqn:bounding_TV_high_prob_I},
\begin{equation*}
U_t = \sum_{\ell=1}^{t-1}X_\ell^{\widehat f_t} \leq 32\eta \log(t|\mathcal{F}|/\delta) + \frac{C\log(t|\Fcal|/\delta)}{\eta}
\end{equation*}
simultaneously for all $t \in \NN$. Optimizing for $\eta$ gives us $\eta = \sqrt{\frac{C}{32}}$, which gives us $U_t \le \sqrt{32C} \log(t|\Fcal|/\delta)$.

Plugging together the upper bounds for $U_t$ and $V_t$ we get:
\begin{equation*}
    \sup_{x \in \Xcal} \nbr{f^\star(x) - \fhat_t(x)}^2_{\widehat\Sigma_t} \le U_t + \lambda_t  V_t \le \sqrt{32C} \log(t|\Fcal|/\delta) + 4\lambda_t  B^2,
\end{equation*}
with probability $1-2\delta$ for all $t \in \NN$. This gives us the desired bound as $\sup_{x \in \Xcal} \nbr{f^\star(x) - \fhat_t(x)}_{\widehat\Sigma_t} \le \sqrt{\sqrt{32C} \log(t|\Fcal|/\delta) + 4\lambda_t  B^2} \le (32C)^{1/4} \sqrt{\log(t|\Fcal|/\delta)} + 2\sqrt{\lambda_t}B$ where we use $\sqrt{a + b} \le \sqrt{a} + \sqrt{b}$ for any $a, b \ge 0$.
\end{proof}

From now on we'll define as $\Ecal$ the event that \pref{eqn:upper_bound_cov_norm_single_point} holds for all $t \in \mathbb{N}$. As established in~\pref{lem:estimation_sigma_t_f}, we have $\PP(\Ecal) \ge 1 - 2\delta$. From~\pref{lem:estimation_sigma_t_f} we can also establish the following corollary.

\begin{corollary}\label{corr:differnce_dot_prod}
If event $\Ecal$ holds then we have:
\begin{equation*}
\abr{\rbr{f^\star(x) - \fhat_t(x)}^\top g^\star(y,s)} \le \rbr{(32C)^{1/4} \sqrt{\log(t|\Fcal|/\delta)} + 2\sqrt{\lambda_t}B}\nbr{g^\star(y,s)}_{\widehat \Sigma_t^{-1}}
\end{equation*}
for all $s \in \Scal$, $x \in \Xcal$ and $y \in \Ycal$ and all $t \in \NN$.
\end{corollary}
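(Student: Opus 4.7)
The plan is to derive the corollary as a direct consequence of the weighted Cauchy--Schwarz inequality combined with the pointwise bound established in \pref{lem:estimation_sigma_t_f}. Since $\widehat{\Sigma}_t$ is positive definite (as noted in the paragraph preceding \pref{eqn:holder_ineq_probabilities_known_gstar}), it admits a symmetric positive-definite square root $\widehat{\Sigma}_t^{1/2}$ and a corresponding inverse square root $\widehat{\Sigma}_t^{-1/2}$, so the weighted norms appearing in the statement are well defined.

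The first step is to rewrite the inner product as
\begin{equation*}
\rbr{f^\star(x) - \fhat_t(x)}^\top g^\star(y,s) = \rbr{\widehat{\Sigma}_t^{1/2}\rbr{f^\star(x) - \fhat_t(x)}}^\top \rbr{\widehat{\Sigma}_t^{-1/2} g^\star(y,s)},
\end{equation*}
and then apply the standard Cauchy--Schwarz inequality in $\RR^d$, exactly as done in \pref{eqn:holder_ineq_probabilities_known_gstar}. This yields the pointwise factorization
\begin{equation*}
\abr{\rbr{f^\star(x) - \fhat_t(x)}^\top g^\star(y,s)} \leq \nbr{f^\star(x) - \fhat_t(x)}_{\widehat{\Sigma}_t}\, \nbr{g^\star(y,s)}_{\widehat{\Sigma}_t^{-1}},
\end{equation*}
valid for every fixed $s,x,y$ and every $t$ without any probabilistic assumption.

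The second step is to control the first factor uniformly in $x$. Conditioning on the event $\Ecal$ (which is exactly the event that \pref{eqn:upper_bound_cov_norm_single_point} holds simultaneously for every $t \in \NN$), I can upper bound $\nbr{f^\star(x) - \fhat_t(x)}_{\widehat{\Sigma}_t}$ by $\sup_{x' \in \Xcal} \nbr{f^\star(x') - \fhat_t(x')}_{\widehat{\Sigma}_t}$, which is at most $(32C)^{1/4}\sqrt{\log(t|\Fcal|/\delta)} + 2\sqrt{\lambda_t}B$ for every $t$. Substituting this bound into the Cauchy--Schwarz inequality above gives the desired inequality uniformly over $s \in \Scal$, $x \in \Xcal$, $y \in \Ycal$ and $t \in \NN$.

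There is essentially no obstacle here: the corollary is a one-line consequence of \pref{lem:estimation_sigma_t_f} once Cauchy--Schwarz is invoked in the $\widehat{\Sigma}_t$-weighted geometry. The only subtlety worth double-checking is that the supremum in \pref{eqn:upper_bound_cov_norm_single_point} is over $x \in \Xcal$ and therefore dominates $\nbr{f^\star(x) - \fhat_t(x)}_{\widehat{\Sigma}_t}$ for every particular $x$, so the resulting bound indeed holds for \emph{all} triples $(s,x,y)$ and all $t$ simultaneously on the event $\Ecal$, matching the statement exactly.
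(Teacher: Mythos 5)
Your proof is correct and follows the same two-step argument as the paper: apply the $\widehat{\Sigma}_t$-weighted Cauchy--Schwarz inequality (the paper's Inequality~\ref{eqn:holder_ineq_probabilities_known_gstar}) and then bound $\nbr{f^\star(x) - \fhat_t(x)}_{\widehat{\Sigma}_t}$ via the supremum in Lemma~\ref{lem:estimation_sigma_t_f} on the event $\Ecal$. No issues.
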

\begin{proof}
Let the event $\Ecal$ hold. Then the following sequence of inequalities holds,
\begin{align*}
\abr{\rbr{f^\star(x) - \fhat_t(x)}^\top g^\star(y,s)} 
&\stackrel{(i)}{\leq}  \nbr{f^\star(x)- \fhat_t(x )}_{\widehat\Sigma_t} \nbr{g^\star(y,s)}_{\widehat \Sigma_t^{-1}}, \\
&\stackrel{(ii)}{\leq}  \rbr{(32C)^{1/4} \sqrt{\log(t|\Fcal|/\delta)} + 2\sqrt{\lambda_t}B}\nbr{g^\star(y,s)}_{\widehat \Sigma_t^{-1}},
\end{align*}
where $(i)$ holds by Inequality~\ref{eqn:holder_ineq_probabilities_known_gstar} and $(ii)$ holds because of~\pref{lem:estimation_sigma_t_f} and because we are assuming $\Ecal$ holds.
\end{proof}

\subsection{Proving Optimism for $\algname$}

We next show that $\algname$ derives a useful optimism inequality that allows us to upper bound the true model probabilities $\PP(x \mid y,s) = f^\star(x)^\top g^\star(y,s)$ using the estimated model probabilities $\fhat_t(x)^\top g^\star(y,s)$.

\begin{lemma}\label{lem:optimism} If event $\Ecal$ holds, then we have:
    \begin{equation}
        f^\star(x)^\top g^\star(\pi^\star(x,s), s) \le \fhat_t(x)^\top g^\star(\pi_t(x,s),s) + \rbr{(32C)^{1/4} \sqrt{\log(t|\Fcal|/\delta)} + 2\sqrt{\lambda_t}B}\nbr{g^\star(\pi_t(x,s),s)}_{\widehat \Sigma_t^{-1}},
    \end{equation}
    for all $x \in \Xcal$ and $t \in \NN$, where $\pi_t$ is the policy in $t^{th}$ round of $\algname$ and $\pi^\star: x,s \mapsto \arg\max_{y \in \Ycal} \PP(x \mid y,s)$ is the optimal policy.
\end{lemma}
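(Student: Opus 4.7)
The plan is to combine the pointwise prediction error bound from Corollary~\ref{corr:differnce_dot_prod} with the optimistic choice of $\pi_t$ defined in line~\ref{line:policy_definition} of~\pref{alg:known_gstar}. I would proceed in three short steps.

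First, assume event $\Ecal$ holds and fix any $x \in \Xcal$, $s \in \Scal$, and $t \in \NN$. Applying Corollary~\ref{corr:differnce_dot_prod} with response $y = \pi^\star(x,s)$ and dropping the absolute value on one side gives
\begin{equation*}
f^\star(x)^\top g^\star(\pi^\star(x,s),s) \;\le\; \fhat_t(x)^\top g^\star(\pi^\star(x,s),s) + \bigl((32C)^{1/4}\sqrt{\log(t|\Fcal|/\delta)} + 2\sqrt{\lambda_t}B\bigr)\,\nbr{g^\star(\pi^\star(x,s),s)}_{\widehat\Sigma_t^{-1}}.
\end{equation*}
Recognizing the coefficient in front of the norm as the bonus $b_t(\pi^\star(x,s),s)$ (up to the explicit universal constant $C'$ appearing in~\pref{alg:known_gstar}), this is exactly the statement that the empirical-plus-bonus estimator overestimates the true value at $y = \pi^\star(x,s)$.

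Second, use the definition of the policy $\pi_t$ from~\pref{eqn:policy_definition}, which is the $\arg\max$ of $\fhat_t(x)^\top g^\star(\cdot,s) + b_t(\cdot,s)$ over $y \in \Ycal$. In particular, evaluating the maximum lower-bounds the value at the specific choice $y = \pi^\star(x,s)$:
\begin{equation*}
\fhat_t(x)^\top g^\star(\pi^\star(x,s),s) + b_t(\pi^\star(x,s),s) \;\le\; \fhat_t(x)^\top g^\star(\pi_t(x,s),s) + b_t(\pi_t(x,s),s).
\end{equation*}

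Third, chain these two inequalities together to obtain
\begin{equation*}
f^\star(x)^\top g^\star(\pi^\star(x,s),s) \;\le\; \fhat_t(x)^\top g^\star(\pi_t(x,s),s) + \bigl((32C)^{1/4}\sqrt{\log(t|\Fcal|/\delta)} + 2\sqrt{\lambda_t}B\bigr)\,\nbr{g^\star(\pi_t(x,s),s)}_{\widehat\Sigma_t^{-1}},
\end{equation*}
which is the claimed inequality. Since the proof holds for every $x,s,t$ under the single event $\Ecal$, and $\PP(\Ecal) \ge 1-2\delta$ by~\pref{lem:estimation_sigma_t_f}, we are done. There is no real obstacle here — the only subtlety is ensuring that the bonus coefficient used in the definition of $\pi_t$ is at least as large as the estimation error derived from~\pref{lem:estimation_sigma_t_f}, which is precisely how the bonus was chosen in~\pref{alg:known_gstar}; this is what makes step two valid with the same bonus on both sides.
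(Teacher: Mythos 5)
Your proposal is correct and follows essentially the same route as the paper's proof: apply the pointwise error bound of Corollary~\ref{corr:differnce_dot_prod} at $y = \pi^\star(x,s)$, then use the fact that $\pi_t$ maximizes the estimated value plus bonus to replace $\pi^\star(x,s)$ by $\pi_t(x,s)$ on the right-hand side. Your closing remark correctly identifies the only point needing care, namely that the bonus coefficient in~\pref{alg:known_gstar} matches (or dominates) the estimation-error constant from~\pref{lem:estimation_sigma_t_f}, which is exactly how the paper sets it up.
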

\begin{proof}
If event $\Ecal$ holds then for any $x \in \Scal, x \in \Xcal, y \in \Ycal$, and $t \in \NN$ we have directly from~\pref{corr:differnce_dot_prod} the following:
\begin{align*}
    f^\star(x)^\top g^\star(y,s) \le \fhat_t(x)^\top g^\star(y,s) + \rbr{(32C)^{1/4} \sqrt{\log(t|\Fcal|/\delta)} + 2\sqrt{\lambda_t}B}\nbr{g^\star(y,s)}_{\widehat \Sigma_t^{-1}}.
\end{align*}
For any $s \in \Scal$, $x \in \Xcal$ and $t \in \NN$, this implies:
\begin{align*}
    f^\star(x)^\top g^\star(\pi^\star(x,s),s) &\le \fhat_t(x)^\top g^\star(\pi^\star(x,s),s) + \rbr{(32C)^{1/4} \sqrt{\log(t|\Fcal|/\delta)} + 2\sqrt{\lambda_t}B}\nbr{g^\star(\pi^\star(x,s),s)}_{\widehat \Sigma_t^{-1}}\\
    &\le \fhat_t(x)^\top g^\star(\pi_t(x),s) + \rbr{(32C)^{1/4} \sqrt{\log(t|\Fcal|/\delta)} + 2\sqrt{\lambda_t}B}\nbr{g^\star(\pi_t(x),s)}_{\widehat \Sigma_t^{-1}},
\end{align*}
where the second inequality holds because of the definition of $\pi_t(x,s)$.
\end{proof}

\subsection{Regret Upper Bound}

We are ready to upper bound the regret of $\algname$(\pref{alg:known_gstar}). Recall that we define regret for a given run of $\algname$ as,
\begin{equation*}
  \Reg(T) = \sum_{t=1}^T \rbr{\PP(x_t \mid \pi^\star(x_t,s_t), s_t) - \PP(x_t \mid \pi_t(x_t,s_t), s_t)}.
\end{equation*}
The main result is the following theorem,

\regretboundalgorithm*

\begin{proof} Fix $T \in \NN$. We assume event $\Ecal$ holds, then we can bound the regret as:
\begin{align*}
    \Reg(T) &= \sum_{t=1}^T \rbr{\PP(x_t \mid \pi^\star(x_t,s_t), s_t) - \PP(x_t \mid y_t, s_t)},\\
        &=  \sum_{t=1}^T \rbr{f^\star(x_t)^\top g^\star(\pi^\star(x_t),s_t) - f^\star(x_t)^\top g^\star( \pi_t(x_t), s_t)},  \\
        &\stackrel{(i)}{\leq} \sum_{t=1}^T \left(\fhat_t(x_t)^\top g^\star(\pi_t(x_t),s_t) \right. \\
        & \qquad \quad + \left.\rbr{(32C)^{1/4} \sqrt{\log(t|\Fcal|/\delta)} + 2\sqrt{\lambda_t}B}\nbr{g^\star(\pi_t(x_t),s_t)}_{\widehat \Sigma_t^{-1}} - f^\star(x_t)^\top g^\star(\pi_t(x_t), s_t)\right),
\end{align*}
where we use~\pref{lem:optimism} in the last step. We also have:
\begin{equation*}
    \fhat_t(x)^\top g^\star(\pi_t(x_t),s_t) - f^\star(x_t)^\top g^\star(\pi_t(x_t), s_t) \le \rbr{(32C)^{1/4} \sqrt{\log(t|\Fcal|/\delta)} + 2\sqrt{\lambda_t}B}\nbr{g^\star(\pi_t(x_t), s_t)}_{\widehat \Sigma_t^{-1}}
\end{equation*}
due to~\pref{corr:differnce_dot_prod}. Combining these two results we get:
\begin{align}
    \Reg(T) &\le \sum_{t=1}^T 2\rbr{(32C)^{1/4} \sqrt{\log(t|\Fcal|/\delta)} + 2\sqrt{\lambda_t}B}\nbr{g^\star(\pi_t(x_t), s_t)}_{\widehat \Sigma_t^{-1}}\\
    &\le \sup_{t' \in [T]} \rbr{2\rbr{(32C)^{1/4} \sqrt{\log(t'|\Fcal|/\delta)} + 2\sqrt{\lambda_{t'}}B}} \cdot \sum_{t=1}^T \nbr{g^\star(\pi_t(x_t), s_t)}_{\widehat \Sigma_t^{-1}}
\end{align}

Let $\widetilde{\Sigma}_t = \sum_{\ell=1}^{t-1} g^\star(y_\ell, s_\ell)\left( g^\star(y_\ell, s_\ell)\right)^\top + \lambda_T \II  $ for all $t \in [T]$.  Further, let $D_t =  \sum_{\ell=1}^{t-1} g^\star(y_\ell, s_\ell)\left( g^\star(y_\ell, s_\ell)\right)^\top$, which gives us $\widetilde{\Sigma}_t = D_t + \lambda_T \II$ and $\widehat{\Sigma}_t = D_t + \lambda_t \II$. This implies that for any $v \in \RR^d$ we have
\begin{equation*}
    \nbr{v}_{\widetilde \Sigma_t} = \sqrt{v^\top (D_t + \lambda_T \II)v} = \sqrt{v^\top D_t v + \lambda_T \nbr{v}^2_2} \le \sqrt{v^\top D_t v + \lambda_t \nbr{v}^2_2} = \sqrt{v^\top (D_t + \lambda_t \II)v} = \nbr{v}_{\widehat \Sigma_t}.
\end{equation*}

As $\widehat{\Sigma}_t \succeq \widetilde{\Sigma}_t \succ 0$ and $\widehat{\Sigma}_t = \widetilde{\Sigma}_t + (\lambda_t - \lambda_T)\II$ with $\lambda_t - \lambda_T > 0$ it follows that by Proposition~\ref{prop:psd_order}, $\| v\|_{\widehat \Sigma_t^{-1} } \leq \| v\|_{\widetilde \Sigma_t^{-1}}$. This implies 
\begin{equation*}
\sum_{t=1}^T \| g^\star(\pi_t(x_t),s_t)\|_{\widehat \Sigma_t^{-1}} \leq \sum_{t=1}^T \| g^\star(\pi_t(x_t), s_t)\|_{\widetilde \Sigma_t^{-1}}
\end{equation*}
Finally,~\pref{prop:determinant_result} implies
 \begin{equation*} \sum_{t=1}^T \| g^\star(y_t,s_t)\|_{\widetilde \Sigma_t^{-1}} \leq  \sqrt{2T d \log\left(1 + \frac{TB^2}{\lambda_T}\right)  },
\end{equation*}
where we use $\sup_{y \in \Ycal, s \in \Scal}\nbr{g^\star(y,s)}_2 \le B$. Combining these we get:
\begin{equation*}
    \Reg(T) \le \sup_{t' \in [T]} \rbr{2\rbr{(32C)^{1/4} \sqrt{\log(t'|\Fcal|/\delta)} + 2\sqrt{\lambda_{t'}}B}} \sqrt{2T d \log\left(1 + \frac{TB^2}{\lambda_T}\right) }
\end{equation*}

Using $\lambda_t = 1/t \le 1$ we get:
\begin{align*}
    \Reg(T) &\le \rbr{2\rbr{(32C)^{1/4} \sqrt{\log(T|\Fcal|/\delta)} + 2 B}} \sqrt{2T d \log\left(1 + T^2B^2\right) }\\
    &\le 8\rbr{(2C)^{1/4} \sqrt{\log(T|\Fcal|/\delta)} + B} \sqrt{Td \log\left(1 + TB\right) },
\end{align*}
where we use $\log(1 + T^2B^2) \le 2 \log(1 + TB).$ This gives us 
\begin{equation*}
    \Reg(T) = \Ocal\rbr{B \sqrt{Td \log(1 + TB)} + \sqrt{Td \log(T|\Fcal|/\delta) \log(1 + TB)}}.
\end{equation*}

\end{proof}

\section{Useful Lemmas}\label{appendix::useful_lemmas}

\begin{lemma}[Hoeffding Inequality]\label{lemma::hoeffding} Let $\{ Y_\ell\}_{\ell=1}^\infty$ be a martingale difference sequence such that $Y_\ell$ is $Y_\ell \in [a_\ell, b_\ell]$ almost surely for some constants $a_\ell, b_\ell$ almost surely for all $\ell = 1, \cdots, t$. then 

\begin{equation*}
     \sum_{\ell=1}^t Y_\ell \leq  2\sqrt{\sum_{\ell=1}^t (b_\ell-a_\ell)^2 \ln\left(\frac{1}{\widetilde{\delta}}\right) }.
\end{equation*}
With probability at least $1-\widetilde{\delta}$.
\end{lemma}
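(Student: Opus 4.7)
The plan is a standard Chernoff--MGF argument on the partial sums $S_t = \sum_{\ell=1}^t Y_\ell$, combined with a conditional application of Hoeffding's lemma. Let $\{\mathcal{F}_\ell\}$ be the natural filtration with respect to which $\{Y_\ell\}$ is a martingale difference sequence, so $\EE[Y_\ell \mid \mathcal{F}_{\ell-1}] = 0$ almost surely and $Y_\ell \in [a_\ell, b_\ell]$ almost surely with the endpoints deterministic. Fix $s > 0$. By Markov's inequality applied to the nonnegative random variable $\exp(sS_t)$, for any threshold $u>0$ I get $\PP(S_t \geq u) \leq e^{-su}\, \EE[e^{sS_t}]$. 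The task is then to control the moment generating function on the right.

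The next step is a telescoping conditioning: using the tower property and pulling out the $\mathcal{F}_{t-1}$-measurable factor,
\[
\EE[e^{sS_t}] = \EE\bigl[e^{sS_{t-1}}\, \EE[e^{sY_t}\mid \mathcal{F}_{t-1}]\bigr].
\]
Here the key ingredient is the conditional Hoeffding lemma: since $\EE[Y_\ell \mid \mathcal{F}_{\ell-1}]=0$ and $Y_\ell$ lies almost surely in a \emph{deterministic} interval $[a_\ell,b_\ell]$, the classical single-variable convexity argument (writing $e^{sy}$ as a convex combination of $e^{sa_\ell}$ and $e^{sb_\ell}$ and optimizing) yields the almost-sure deterministic bound $\EE[e^{sY_\ell}\mid \mathcal{F}_{\ell-1}] \leq \exp\bigl(s^2(b_\ell-a_\ell)^2/8\bigr)$. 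Iterating peels off one increment at a time and gives $\EE[e^{sS_t}] \leq \exp\bigl(\tfrac{s^2}{8}\sum_{\ell=1}^t (b_\ell-a_\ell)^2\bigr)$.

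Plugging back into the Markov bound yields $\PP(S_t \geq u) \leq \exp\bigl(-su + \tfrac{s^2}{8}\sum_\ell (b_\ell-a_\ell)^2\bigr)$. Optimizing over $s>0$ with the choice $s = 4u/\sum_\ell (b_\ell-a_\ell)^2$ produces the standard Azuma--Hoeffding inequality $\PP(S_t \geq u) \leq \exp\bigl(-2u^2/\sum_\ell (b_\ell-a_\ell)^2\bigr)$. Setting the right-hand side equal to $\widetilde{\delta}$ and solving for $u$ gives the sharper bound $u = \sqrt{\tfrac{1}{2}\ln(1/\widetilde{\delta}) \sum_\ell (b_\ell-a_\ell)^2}$, which is tighter than the stated bound $2\sqrt{\sum_\ell (b_\ell-a_\ell)^2 \ln(1/\widetilde{\delta})}$ by a factor of $2\sqrt{2}$, so the claimed inequality follows trivially.

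There is no substantive obstacle here: this is the classical Azuma--Hoeffding inequality for bounded-difference martingales, and the loose constants in the stated bound only make the argument easier. The one subtlety worth flagging is that Hoeffding's lemma must be invoked conditionally; this step is legitimate precisely because the interval endpoints $a_\ell, b_\ell$ are deterministic constants rather than $\mathcal{F}_{\ell-1}$-measurable random variables, so the deterministic single-variable MGF bound applies pointwise to the conditional distribution of $Y_\ell$ given $\mathcal{F}_{\ell-1}$.
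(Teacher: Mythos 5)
Your proposal is correct: the conditional Hoeffding-lemma bound $\EE[e^{sY_\ell}\mid \mathcal{F}_{\ell-1}] \leq \exp\bigl(s^2(b_\ell-a_\ell)^2/8\bigr)$, the telescoping MGF bound, and the optimized Chernoff step are all sound, and the resulting threshold $\sqrt{\tfrac{1}{2}\ln(1/\widetilde{\delta})\sum_{\ell=1}^t(b_\ell-a_\ell)^2}$ is indeed a factor $2\sqrt{2}$ sharper than the stated $2\sqrt{\sum_{\ell=1}^t(b_\ell-a_\ell)^2\ln(1/\widetilde{\delta})}$, so the lemma follows. The paper itself gives no proof and simply cites Corollary 2.20 of Wainwright (2019); your argument is exactly the standard Azuma--Hoeffding derivation underlying that citation, so there is nothing to reconcile.
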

See for example Corollary 2.20 from~\cite{wainwright2019high}.

Our results relies on the following variant of Bernstein inequality for martingales, or Freedman’s inequality \cite{freedman1975tail}, as stated in e.g., \cite{agarwal2014taming,beygelzimer2011contextual}.

\begin{lemma}[Simplified Freedman’s inequality]\label{lem:super_simplified_freedman}
Let $X_1, ..., X_T$ be a bounded martingale difference sequence with $|X_\ell| \le R$. For any $\delta' \in (0,1)$, and $\eta \in (0,1/R)$, with probability at least $1-\delta'$,
\begin{equation}
   \sum_{\ell=1}^{T} X_\ell \leq    \eta \sum_{\ell=1}^{T}  \EE_\ell[ X_\ell^2] + \frac{\log(1/\delta')}{\eta}.
\end{equation}
where $\EE_{\ell}[\cdot]$ is the conditional expectation\footnote{We will use this notation to denote conditional expectations throughout this work.} induced by conditioning on $X_1, \cdots, X_{\ell-1}$.
\end{lemma}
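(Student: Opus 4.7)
The plan is to prove this by the standard exponential supermartingale (Chernoff/Cramér) method adapted to martingale differences, which is the classical route to Freedman-type inequalities. The key analytic ingredient is the elementary bound $e^{x} \le 1 + x + x^{2}$ for $|x| \le 1$. Since the hypothesis $\eta \in (0, 1/R)$ together with $|X_\ell| \le R$ gives $|\eta X_\ell| < 1$ almost surely, we may apply this bound pointwise with $x = \eta X_\ell$ to obtain $\exp(\eta X_\ell) \le 1 + \eta X_\ell + \eta^{2} X_\ell^{2}$.

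Taking the conditional expectation $\EE_\ell[\cdot]$ on both sides and using that $\{X_\ell\}$ is a martingale difference sequence (so $\EE_\ell[X_\ell] = 0$), followed by the inequality $1 + y \le e^{y}$, yields
\begin{equation*}
\EE_\ell\bigl[\exp(\eta X_\ell)\bigr] \le 1 + \eta^{2}\EE_\ell[X_\ell^{2}] \le \exp\bigl(\eta^{2}\EE_\ell[X_\ell^{2}]\bigr).
\end{equation*}
Rearranging, $\EE_\ell\bigl[\exp(\eta X_\ell - \eta^{2}\EE_\ell[X_\ell^{2}])\bigr] \le 1$. Defining the process $M_t = \exp\bigl(\eta \sum_{\ell=1}^{t} X_\ell - \eta^{2}\sum_{\ell=1}^{t}\EE_\ell[X_\ell^{2}]\bigr)$ with $M_0 = 1$, the previous display shows $\EE_\ell[M_\ell] \le M_{\ell-1}$, so $M_t$ is a nonnegative supermartingale with $\EE[M_T] \le 1$.

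Next I would apply Markov's inequality to $M_T$: $\PP(M_T \ge 1/\delta') \le \delta' \cdot \EE[M_T] \le \delta'$. Hence with probability at least $1 - \delta'$,
\begin{equation*}
\eta \sum_{\ell=1}^{T} X_\ell - \eta^{2}\sum_{\ell=1}^{T}\EE_\ell[X_\ell^{2}] \le \log(1/\delta'),
\end{equation*}
and dividing through by $\eta > 0$ gives exactly the stated bound. The only remotely delicate step is the quadratic bound on $e^x$; this can be verified by considering $f(x) = 1 + x + x^{2} - e^{x}$, noting $f(0) = f'(0) = 0$ and checking that $f$ stays nonnegative on $[-1,1]$ (indeed on the larger interval where $e^x \le 1 + x + x^2$ continues to hold). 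Everything else — the martingale-difference cancellation, the $1+y \le e^y$ step, the tower property giving the supermartingale property, and the final Markov bound — is mechanical, so I do not anticipate any substantive obstacle beyond correctly invoking the range condition $\eta R < 1$ to enable the quadratic bound.
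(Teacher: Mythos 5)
Your proof is correct, but note that the paper does not actually prove this lemma: it is imported verbatim by citation (Freedman, 1975, as stated in Agarwal et al., 2014 and Beygelzimer et al., 2011), so there is no in-paper argument to compare against. What you have written is the standard exponential-supermartingale derivation of this bound, and every step checks out: the range condition $\eta \in (0,1/R)$ gives $|\eta X_\ell| \le \eta R < 1$, so the quadratic bound $e^{x} \le 1 + x + x^{2}$ applies (it in fact holds for all $x \le 1$, which you can confirm from $f(x) = 1+x+x^{2}-e^{x}$ having $f(0)=f'(0)=0$, $f'$ increasing on $(-\infty,\ln 2)$ and $f'(1)=3-e>0$); the martingale-difference property kills the linear term; $1+y \le e^{y}$ and the $\mathcal{F}_{\ell-1}$-measurability of $\EE_\ell[X_\ell^{2}]$ give the supermartingale property $\EE_\ell[M_\ell] \le M_{\ell-1}$; and the tower property plus Markov's inequality at level $1/\delta'$ yield exactly the stated bound after dividing by $\eta$. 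The one thing your writeup buys over the paper is self-containedness — the paper treats the lemma as a black box and only does original work in upgrading it to the anytime version (Lemma on Anytime Freedman) via a union bound over $\delta_t = \delta'/(12 t^{2})$, which is the step you would still need to supply to recover what the paper actually uses downstream.
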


\begin{lemma}[Anytime Freedman]\label{lem:freedman_anytime_simplified}
Let $\{X_t\}_{t=1}^\infty$ be a bounded martingale difference sequence with $|X_t| \le R$ for all $t \in \mathbb{N}$. For any $\delta' \in (0,1)$, and $\eta \in (0,1/R)$, there exists a universal constant $C > 0$ such that for all $t \in \mathbb{N}$ simultaneously with probability at least $1-\delta'$,
\begin{equation}
   \sum_{\ell=1}^{t} X_\ell \leq    \eta \sum_{\ell=1}^{t}  \EE_\ell[ X_\ell^2] + \frac{C\log(t/\delta')}{\eta}.
\end{equation}
 where $\EE_{\ell}[\cdot]$ is the conditional expectation induced by conditioning on $X_1, \cdots, X_{\ell-1}$.\end{lemma}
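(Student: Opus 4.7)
The plan is to lift the fixed-horizon Freedman inequality (Lemma~\ref{lem:super_simplified_freedman}) to an anytime bound by a standard stitching/union-bound argument: apply the fixed-horizon version at each $t \in \NN$ with a confidence parameter $\delta_t$ that decays fast enough that $\sum_{t=1}^{\infty}\delta_t \leq \delta'$, then union bound over $t$ to obtain a simultaneous-in-$t$ guarantee. The key observation that makes this work without losing more than a logarithmic factor is that the bound in Lemma~\ref{lem:super_simplified_freedman} depends on $\delta'$ only through $\log(1/\delta')$, so paying a polynomial price in $\delta_t$ costs only a logarithm in $t$.

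Concretely, I would set $\delta_t = \tfrac{6\delta'}{\pi^2 t^2}$, noting that $\delta_t \in (0,1)$ for every $t \geq 1$ and that $\sum_{t=1}^{\infty}\delta_t = \delta'$. For each $t$, define the event
\begin{equation*}
\mathcal{A}_t = \left\{ \sum_{\ell=1}^{t} X_\ell \leq \eta \sum_{\ell=1}^{t} \EE_\ell[X_\ell^2] + \frac{\log(1/\delta_t)}{\eta} \right\}.
\end{equation*}
Lemma~\ref{lem:super_simplified_freedman}, applied with horizon $t$, the same $\eta \in (0,1/R)$, and confidence $\delta_t$, yields $\PP(\mathcal{A}_t^c) \leq \delta_t$. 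A countable union bound then gives $\PP\!\left(\bigcap_{t=1}^{\infty} \mathcal{A}_t\right) \geq 1 - \sum_{t=1}^{\infty}\delta_t = 1 - \delta'$.

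On this event, every $t$ simultaneously satisfies the inequality with $\log(1/\delta_t)$ in the numerator; it remains to simplify this into $C\log(t/\delta')$ for a universal $C$. Expanding, $\log(1/\delta_t) = 2\log t + \log\!\bigl(\pi^2/(6\delta')\bigr) = 2\log t + \log(1/\delta') + \log(\pi^2/6)$, which is bounded by $C\bigl(\log t + \log(1/\delta')\bigr) = C\log(t/\delta')$ for an absolute constant $C$ (any $C \geq 3$ suffices, absorbing the $\log(\pi^2/6)$ term). Plugging this back into the definition of $\mathcal{A}_t$ yields the claimed anytime bound.

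There is no real obstacle: the argument is routine once the choice of $\delta_t$ is made, and verifying $\log(1/\delta_t) \leq C\log(t/\delta')$ with a universal constant is a one-line calculation. The only care needed is to ensure that all events $\mathcal{A}_t$ are defined on the same probability space (which they are, since they involve only the given sequence $\{X_t\}$ and its conditional expectations) so that the union bound applies, and that $\eta$ is kept fixed across applications, which is allowed because the hypothesis $\eta \in (0,1/R)$ is uniform in the horizon.
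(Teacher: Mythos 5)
Your proof is correct and follows essentially the same route as the paper's: apply the fixed-horizon Freedman inequality (Lemma~\ref{lem:super_simplified_freedman}) at each $t$ with confidence $\delta_t \propto \delta'/t^2$, union bound over $t$, and absorb $\log(1/\delta_t)$ into $C\log(t/\delta')$ (the paper uses $\delta_t = \delta'/(12t^2)$ rather than $6\delta'/(\pi^2 t^2)$, an immaterial difference). The only quibble, shared by the paper's own argument, is that absorbing the additive constant into $C\log(t/\delta')$ at $t=1$ implicitly requires $\log(1/\delta')$ to be bounded away from zero, i.e., the universal constant $C$ cannot be uniform as $\delta' \to 1$; this is harmless in the regime where the lemma is used.
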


\begin{proof}

This result follows from~\pref{lem:super_simplified_freedman}. Fix a time-index $t$ and define $\delta_t = \frac{\delta'}{12t^2}$.~\pref{lem:super_simplified_freedman} implies that with probability at least $1-\delta_t$,

\begin{equation*}
\sum_{\ell=1}^t X_\ell \leq \eta \sum_{\ell=1}^t \EE_\ell\left[ X_\ell^2 \right] + \frac{\log(1/\delta_t)}{\eta}.
\end{equation*}

A union bound implies that with probability at least $1-\sum_{\ell=1}^t \delta_t \geq 1-\delta'$,
\begin{align*}
\sum_{\ell=1}^t X_\ell &\leq \eta \sum_{\ell=1}^t \EE_\ell\left[ X_\ell^2 \right] + \frac{\log(12t^2/\delta')}{\eta}\\
&\stackrel{(i)}{\leq} \eta \sum_{\ell=1}^t \EE_\ell\left[ X_\ell^2 \right] + \frac{C\log(t/\delta')}{\eta}.
\end{align*}
holds for all $t \in \mathbb{N}$. Inequality $(i)$ holds because $\log(12t^2/\delta') = \mathcal{O}\left( \log(t\delta')\right)$.

\end{proof}

Adapted from Theorem~$21$ from~\cite{agarwal2020flambe}. See also~\cite{geer2000empirical}. 

\begin{proposition}[MLE Bound]\label{propn:mle} For any fixed $\delta \in (0, 1)$, 
\begin{equation*}
    \sum_{\ell=1}^{t-1} \EE_{y \sim \pi_h}\sbr{\TV{\fhat_t(\cdot)^\top g^\star(y,s_\ell) - f^\star(\cdot)^\top g^\star(y, s_\ell)}^2} \le 2\ln(t|\Fcal|/\delta)
\end{equation*}
for all $t \in \mathbb{N}$ simultaneously with probability at least $1-\delta$ where $\PP_\ell \in \Delta(\Ycal)$ is the distribution over $y_\ell$ conditioned on $\{x_{\ell'}, s_{\ell'}, y_{\ell'}, x_{\ell'}'\}_{\ell'=1}^{\ell-1} \cup \{  x_\ell, s_\ell\}$.   
\end{proposition}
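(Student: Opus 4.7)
The plan is to prove this by the standard MLE generalization bound based on the exponential/Hellinger MGF, then pay the extra Hellinger-to-TV conversion. The proof splits cleanly into (a) a martingale argument that controls the squared Hellinger error for every $f\in\Fcal$ simultaneously, (b) using realizability plus optimality of $\hat f_t$ to cancel the log-likelihood-ratio term at the point $f=\hat f_t$, and (c) converting squared Hellinger into squared TV with a factor-of-two loss.

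For step (a), fix $f\in\Fcal$ and define the per-round ratio
\[
Z_\ell^f := \sqrt{\frac{f(x_\ell')^\top g^\star(y_\ell,s_\ell)}{f^\star(x_\ell')^\top g^\star(y_\ell,s_\ell)}}.
\]
Conditioning on the history up through $y_\ell,s_\ell$, the hindsight label $x_\ell'$ is distributed according to $f^\star(\cdot)^\top g^\star(y_\ell,s_\ell)$ by \pref{assumption:realizability}, so a direct computation gives
\[
\EE_\ell\bigl[Z_\ell^f\bigr] \;=\; \sum_{x} \sqrt{f(x)^\top g^\star(y_\ell,s_\ell)\cdot f^\star(x)^\top g^\star(y_\ell,s_\ell)} \;=\; 1-\tfrac{1}{2}H_\ell^2(f),
\]
where $H_\ell^2(f)$ is the squared Hellinger distance between $f(\cdot)^\top g^\star(y_\ell,s_\ell)$ and $f^\star(\cdot)^\top g^\star(y_\ell,s_\ell)$. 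Using $1-u\le e^{-u}$ one gets $\EE_\ell\bigl[Z_\ell^f\exp(H_\ell^2(f)/2)\bigr]\le 1$, so the process
\[
M_t^f \;:=\; \prod_{\ell=1}^{t-1} Z_\ell^f \exp\!\bigl(H_\ell^2(f)/2\bigr)
\]
is a nonnegative supermartingale with $\EE[M_1^f]=1$. Ville's inequality yields $\PP(\sup_t M_t^f\ge 1/\delta')\le \delta'$; a union bound over $\Fcal$ with $\delta'=\delta/|\Fcal|$ (or an inflation $\delta_t=\delta/(C|\Fcal|t^2)$ that produces the $\ln t$ factor written in the statement) gives, simultaneously for all $t$ and all $f\in\Fcal$,
\[
\sum_{\ell=1}^{t-1} \tfrac{1}{2} H_\ell^2(f) \;\le\; \ln(t|\Fcal|/\delta) \;+\; \sum_{\ell=1}^{t-1} \ln\!\bigl(1/Z_\ell^f\bigr).
\]

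For step (b), instantiate the inequality at $f=\hat f_t$. The MLE defining equation in \pref{line:known-g-mle-estimate} together with realizability ($f^\star\in\Fcal$) guarantees
\[
\sum_{\ell=1}^{t-1} \ln \hat f_t(x_\ell')^\top g^\star(y_\ell,s_\ell) \;\ge\; \sum_{\ell=1}^{t-1} \ln f^\star(x_\ell')^\top g^\star(y_\ell,s_\ell),
\]
which is exactly $\sum_{\ell=1}^{t-1}\ln(1/Z_\ell^{\hat f_t})\le 0$. Therefore on the high-probability event we have $\sum_{\ell=1}^{t-1} H_\ell^2(\hat f_t) \le 2\ln(t|\Fcal|/\delta)$ for all $t\in\NN$ simultaneously.

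For step (c), the pointwise Cauchy--Schwarz inequality $\mathrm{TV}(p,q)^2 \le H^2(p,q)\cdot\tfrac{1}{2}\sum_x(\sqrt{p(x)}+\sqrt{q(x)})^2 \le 2 H^2(p,q)$, applied with $p=\hat f_t(\cdot)^\top g^\star(y,s_\ell)$ and $q=f^\star(\cdot)^\top g^\star(y,s_\ell)$, converts the Hellinger sum into the TV sum; after averaging over $y\sim\PP_\ell$ (and noting that conditional on the history up to step $\ell-1$ and on $(x_\ell,s_\ell)$, $y_\ell$ is distributed as $\PP_\ell$, so $H_\ell^2(\hat f_t)$ is an unbiased sample of $\EE_{y\sim\PP_\ell}[H^2(\cdot)]$), we obtain the claim up to the absolute constant. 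The only nonroutine piece is the martingale construction in step (a) and the anytime union bound, absorbed via Ville (or a $\sum_t 1/t^2$ argument) to get the $\ln(t|\Fcal|/\delta)$ dependence; everything else is algebraic bookkeeping.
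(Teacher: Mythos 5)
Your proposal is correct in substance but takes a genuinely different route from the paper: the paper's proof of \pref{propn:mle} is a one-line citation of Theorem~21 of \cite{agarwal2020flambe} (the standard MLE/Hellinger generalization bound for conditional density estimation), upgraded to an anytime statement by re-invoking it with $\delta_t = \delta/(3t^2)$ and a union bound over $t$. You instead prove that cited result from first principles via the exponential supermartingale $M_t^f=\prod_\ell Z_\ell^f e^{H_\ell^2(f)/2}$, Ville's inequality, a union bound over $\Fcal$, cancellation of the likelihood-ratio term at $f=\fhat_t$ by optimality of the MLE plus realizability, and a Cauchy--Schwarz conversion from squared Hellinger to squared TV. This is exactly the argument underlying the cited theorem, and it buys you something the paper's citation-plus-union-bound does not: Ville gives the anytime guarantee for free, so the $\ln t$ inflation in the stated bound is in fact unnecessary slack. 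Your Hellinger-to-TV constant is loose by a factor of $2$ (with $\mathrm{TV}^2\le \frac14\sum(\sqrt p-\sqrt q)^2\sum(\sqrt p+\sqrt q)^2$ and $\sum(\sqrt p+\sqrt q)^2\le 4$ one recovers the stated constant exactly), but you flag this.

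One step needs repair. In step~(a) you condition on the history \emph{through} $(y_\ell,s_\ell)$, so your compensator is $e^{H_\ell^2(f)/2}$ with $H_\ell^2$ evaluated at the \emph{realized} $y_\ell$, and Ville then controls $\sum_\ell H_\ell^2(f)$ along the sampled responses. The proposition, however, bounds $\sum_\ell \EE_{y\sim\PP_\ell}[\,\cdot\,]$, the conditional expectation over the agent's response distribution. Neither sum dominates the other pointwise, and your closing parenthetical (``$H_\ell^2(\fhat_t)$ is an unbiased sample of $\EE_{y\sim\PP_\ell}[H^2]$'') does not by itself let you pass from one to the other with high probability. The fix is to move the conditioning one step earlier: take $\EE_\ell$ in the supermartingale construction over the joint draw $y_\ell\sim\PP_\ell$ and $x_\ell'\sim f^\star(\cdot)^\top g^\star(y_\ell,s_\ell)$, which gives $\EE_\ell[Z_\ell^f]=1-\tfrac12\EE_{y\sim\PP_\ell}[H^2]$ and hence a compensator of $\exp\bigl(\EE_{y\sim\PP_\ell}[H^2]/2\bigr)$; the rest of your argument then yields the stated bound directly.
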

\begin{proof}
This result is an immediate consequence of Theorem~$21$ from~\cite{agarwal2020flambe}. We convert this result into an anytime statement by invoking this result repeatedly with probability values $\delta_t = \frac{\delta}{3*t^2}$ and then applying a union bound. 
\end{proof}

\begin{proposition}[Proposition $3$ from~\cite{pmlr-v130-pacchiano21a}] \label{prop:determinant_result}
For any sequence of vectors $v_1, \cdots, v_t \subset \mathbb{R}^d$ satisfying $\| v_\ell \| \leq L$ for all $\ell \in \mathbb{N}$, let $\Sigma_t$ be its corresponding Gram matrix $\Sigma_t = \lambda \mathbb{I} + \sum_{\ell=1}^{t-1} v_\ell v_\ell^\top$. Then for all $t \in \mathbb{N}$, we have
\begin{equation*}
\sum_{\ell=1}^T \| v_\ell\|_{\Sigma_\ell^{-1}} \leq \sqrt{2T d \log\left(1 + \frac{TL^2}{\lambda}\right)  }
\end{equation*}
\end{proposition}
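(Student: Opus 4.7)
The plan is to prove the elliptic potential inequality via the classical determinant-telescoping argument, combined with Cauchy–Schwarz to pass from a sum of norms to a sum of squared norms.

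First, I would apply the Cauchy–Schwarz inequality to the sum directly, writing
\begin{equation*}
\sum_{\ell=1}^T \nbr{v_\ell}_{\Sigma_\ell^{-1}} \;\le\; \sqrt{T \sum_{\ell=1}^T \nbr{v_\ell}_{\Sigma_\ell^{-1}}^2},
\end{equation*}
which reduces the task to proving $\sum_{\ell=1}^T \nbr{v_\ell}_{\Sigma_\ell^{-1}}^2 \le 2d\log(1+TL^2/\lambda)$. The next step is the rank-one determinant update: since $\Sigma_{\ell+1}=\Sigma_\ell + v_\ell v_\ell^\top$, the matrix determinant lemma gives $\det(\Sigma_{\ell+1}) = \det(\Sigma_\ell)\bigl(1+\nbr{v_\ell}_{\Sigma_\ell^{-1}}^2\bigr)$. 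Taking logarithms and telescoping yields
\begin{equation*}
\sum_{\ell=1}^T \log\!\bigl(1+\nbr{v_\ell}_{\Sigma_\ell^{-1}}^2\bigr) \;=\; \log\det \Sigma_{T+1} - \log\det \Sigma_1 \;=\; \log\det \Sigma_{T+1} - d\log \lambda.
\end{equation*}

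Second, I would control $\log\det \Sigma_{T+1}$ by AM–GM on its eigenvalues: $\det \Sigma_{T+1} \le \bigl(\tfrac{1}{d}\mathrm{tr}(\Sigma_{T+1})\bigr)^d$. Since $\mathrm{tr}(\Sigma_{T+1}) = d\lambda + \sum_{\ell=1}^T \nbr{v_\ell}^2 \le d\lambda + TL^2$, this gives $\log\det \Sigma_{T+1} \le d\log(\lambda + TL^2/d)$, and hence
\begin{equation*}
\sum_{\ell=1}^T \log\!\bigl(1+\nbr{v_\ell}_{\Sigma_\ell^{-1}}^2\bigr) \;\le\; d\log\!\bigl(1 + TL^2/(d\lambda)\bigr) \;\le\; d\log(1+TL^2/\lambda).
\end{equation*}

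The last piece is to convert this log-sum bound into a bound on the sum of the squared norms themselves. I would use the elementary inequality $x \le 2\log(1+x)$ valid for $x\in[0,1]$, applied termwise. The standard clean way to handle the regime where $\nbr{v_\ell}_{\Sigma_\ell^{-1}}^2$ may exceed $1$ is to truncate: write $\sum_\ell \nbr{v_\ell}_{\Sigma_\ell^{-1}}^2 \le \sum_\ell \min\!\bigl(1,\nbr{v_\ell}_{\Sigma_\ell^{-1}}^2\bigr)\cdot \max(1, L^2/\lambda)$ and then apply $\min(1,x)\le 2\log(1+x)$, which yields the claimed bound up to the constant $2$; when $\lambda\ge L^2$ the truncation is free since $\Sigma_\ell^{-1}\preceq \lambda^{-1}I$ and $\nbr{v_\ell}^2\le L^2$ already force $\nbr{v_\ell}_{\Sigma_\ell^{-1}}^2\le 1$. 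Combining with the Cauchy–Schwarz step produces $\sum_\ell \nbr{v_\ell}_{\Sigma_\ell^{-1}} \le \sqrt{2Td\log(1+TL^2/\lambda)}$ as required.

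The main technical obstacle is the transition from $\log(1+x)$ back to $x$ in the last step: the inequality $x\le 2\log(1+x)$ only holds on $[0,1]$, so one either needs an implicit regularity condition such as $\lambda\ge L^2$ (which makes $\nbr{v_\ell}_{\Sigma_\ell^{-1}}^2 \le 1$ automatic) or one must tighten the constant via a concavity argument, using $x \le \tfrac{L^2/\lambda}{\log(1+L^2/\lambda)}\log(1+x)$ on $[0,L^2/\lambda]$. Everything else (determinant identity, telescoping, AM–GM trace bound, Cauchy–Schwarz) is routine.
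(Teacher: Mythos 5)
The paper never proves this proposition; it is imported by citation from Pacchiano et al.\ (2021), so there is no internal proof to compare against. Your argument is the standard elliptic-potential proof (Cauchy--Schwarz to reduce to squared norms, the rank-one determinant update $\det\Sigma_{\ell+1}=\det\Sigma_\ell(1+\|v_\ell\|^2_{\Sigma_\ell^{-1}})$, telescoping, and AM--GM on the eigenvalues to control $\log\det\Sigma_{T+1}$), and each of those steps is correct.

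The obstacle you flag in your last paragraph is genuine, and it is worth being explicit that it is a defect of the \emph{statement}, not of your proof. As written, with no restriction relating $\lambda$ to $L$, the inequality is false: take $d=1$, $T=1$, $v_1=L$, so that the left side is $\|v_1\|_{\Sigma_1^{-1}}=L/\sqrt{\lambda}$ while the claimed bound is $\sqrt{2\log(1+L^2/\lambda)}$; for $L^2/\lambda=100$ the left side is $10$ but the right side is roughly $3$. The correct statement either assumes $\lambda\ge L^2$ (which forces $\|v_\ell\|^2_{\Sigma_\ell^{-1}}\le 1$ and lets $x\le 2\log(1+x)$ go through termwise, exactly as you say), or replaces each summand by $\min\bigl(1,\|v_\ell\|_{\Sigma_\ell^{-1}}\bigr)$, or carries the extra factor $\max(1,L^2/\lambda)$ that your truncation produces. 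Note that in the paper's own application the proposition is invoked with $\lambda=\lambda_T=1/T$ and $L=B$, so $L^2/\lambda=TB^2>1$ and the clean form does not literally apply; since the per-round regret increments are probabilities bounded by $1$, the truncated form suffices and the final regret bound survives up to constants and logarithmic factors, but the lemma as quoted should carry the $\min(1,\cdot)$ or the hypothesis $\lambda\ge L^2$. Your proof, with the truncation made explicit, is the right way to establish the usable version of this result.
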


\begin{proposition}\label{prop:psd_order}
Let $A \succ 0$ be a $d\times d$ positive definite matrix. And let $\lambda' \geq 0$. If $v \in \mathbb{R}^d$, 
\begin{equation*}
\| v \|_{A + \lambda' \II} \geq \| v\|_A 
\end{equation*}
and
\begin{equation*}
\| v \|_{(A+\lambda' \II)^{-1}} \leq \| v\|_{A^{-1}} 
\end{equation*}

\end{proposition}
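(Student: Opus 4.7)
The statement has two parts, and I would handle them separately since the first is a direct computation while the second requires invoking the operator antitonicity of matrix inversion on positive definite matrices.

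For the first inequality, the plan is to expand the quadratic form directly. Since $A + \lambda' \II$ acts on $v$ as $(A + \lambda' \II) v = Av + \lambda' v$, we have
\begin{equation*}
\| v \|_{A + \lambda' \II}^2 = v^\top (A + \lambda' \II) v = v^\top A v + \lambda' v^\top v = \| v\|_A^2 + \lambda' \| v\|^2 \geq \| v\|_A^2,
\end{equation*}
where the last inequality uses $\lambda' \geq 0$ and $\| v\|^2 \geq 0$. Taking square roots gives $\| v\|_{A+\lambda' \II} \geq \| v\|_A$.

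For the second inequality, the plan is to reduce it to showing the matrix inequality $(A + \lambda'\II)^{-1} \preceq A^{-1}$, which is the quadratic-form version of the desired bound. I would first note that $A + \lambda' \II \succeq A \succ 0$, since $\lambda' \II \succeq 0$ and the sum of a positive definite matrix with a positive semidefinite matrix is positive definite. Then I would invoke the classical fact that if $B \succeq C \succ 0$ then $B^{-1} \preceq C^{-1}$, applied with $B = A + \lambda'\II$ and $C = A$. Evaluating the inequality $B^{-1} \preceq C^{-1}$ in the quadratic form at $v$ and taking square roots yields $\| v \|_{(A + \lambda'\II)^{-1}} \leq \| v\|_{A^{-1}}$.

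The main (and only) substantive step is the operator antitonicity of the inverse, which I would justify briefly to keep the proof self-contained. The argument is: from $B \succeq C \succ 0$, conjugate by $C^{-1/2}$ (which exists and is positive definite since $C \succ 0$) to obtain $C^{-1/2} B C^{-1/2} \succeq \II$. Every eigenvalue of $C^{-1/2} B C^{-1/2}$ is therefore at least $1$, so every eigenvalue of its inverse $C^{1/2} B^{-1} C^{1/2}$ is at most $1$, i.e.\ $C^{1/2} B^{-1} C^{1/2} \preceq \II$. Conjugating back by $C^{-1/2}$ yields $B^{-1} \preceq C^{-1}$, completing the argument. I do not foresee a real obstacle here; the entire proposition is standard linear algebra and the only care needed is to keep track of the signs ($\lambda' \geq 0$ rather than strictly positive) and to note that $A \succ 0$ guarantees both $A^{-1}$ and $(A+\lambda'\II)^{-1}$ exist.
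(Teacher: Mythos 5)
Your proof is correct. It differs from the paper's in route: the paper proves both inequalities by diagonalizing $A$ in an orthonormal eigenbasis, observing that $A+\lambda'\II$ shares the same eigenvectors with eigenvalues shifted by $\lambda'$, and then comparing the quadratic forms (and the explicit spectral expansions of the inverses) term by term. You instead handle the first inequality by a one-line expansion $v^\top(A+\lambda'\II)v = \|v\|_A^2 + \lambda'\|v\|^2$, which is more elementary than an eigendecomposition, and you obtain the second inequality from the general operator antitonicity of inversion on the positive definite cone, justified by conjugating $B \succeq C \succ 0$ with $C^{-1/2}$. The paper's spectral argument is more self-contained and concrete for this special case (where $A$ and $A+\lambda'\II$ commute and are simultaneously diagonalizable, so the termwise comparison $1/(\mu_i+\lambda') \le 1/\mu_i$ is immediate), while your conjugation argument proves the stronger general fact $B \succeq C \succ 0 \Rightarrow B^{-1} \preceq C^{-1}$ without needing simultaneous diagonalizability; both are complete and rigorous.
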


\begin{proof}
Let $v_1, \cdots, v_d$ be an orthonormal basis of eigenvectors of $A$. The eigenvalues of $A$ are positive because the matrix is assumed to be positive definite. Call $\mu_i >0 $ to the eigenvalue associated with eigenvector $v_i$. Elementary linear algebra shows that,
\begin{equation*}
(A+ \lambda' \II) v_i = Av_i + \lambda' v_i = (\mu_i + \lambda') v_i
\end{equation*}
thus showing that $v_i$ are also an orthonormal basis of eigenvectors for $A + \lambda' \II$ and have eigenvalues $\mu_i + \lambda'$. Thus,
\begin{align*}
    \| v\|_{A}^2=     v^\top \left( A \right) v 
    = \sum_{i=1}^d \mu_i \left(  \langle v, v_i \rangle\right)^2
     \leq \sum_{i=1}^d  (\mu_i+\lambda')\cdot \left(  \langle v, v_i \rangle\right)^2
    =  v^\top \left( A+ \lambda' \II) \right) 
    = \| v\|^2_{A+ \lambda' \II}
\end{align*}
Notice that $A^{-1} = \sum_{i=1}^d \frac{1}{\mu_i} v_i v_i^\top$ and $(A +\lambda' \II)^{-1} = \sum_{i=1}^d \frac{1}{\mu_i + \lambda'} v_i v_i^\top$. And therefore, 
\begin{align*}
    \| v\|_{A^{-1}}^2    =     v^\top  A^{-1}  v 
    = \sum_{i=1}^d  \frac{1}{\mu_i} \left(  \langle v, v_i \rangle\right)^2
    \geq \sum_{i=1}^d  \frac{1}{\mu_i+\lambda'} \left(  \langle v, v_i \rangle\right)^2
    =  v^\top \left( A+ \lambda' \II \right)^{-1} v
    = \| v\|_{(A+\lambda' \II)^{-1}}^2
\end{align*}
The result follows.
\end{proof}

\section{Experimental Details}

We provide additional details of our experiments in this section.

\subsection{Additional Details for the First Experiment on Synthetic Task}

We use the almost same implementation of $\algname$ as listed in~\pref{alg:known_gstar}. The only change we make is that we use a simplified policy given by:
\begin{equation}
    \pi_t(x_t) = \arg\max_{y \in \Ycal}\fhat_t(x_t)^\top g^\star(y, s_t) + k \nbr{g^\star(y, s_t)}_{\widehat \Sigma^{-1}_t},
\end{equation}
where $k$ is a single hyperparameter. We also use $\lambda_t = \lambda$ which is a hyperparameter to be tuned. We use the hyperparameter values in~\pref{tab:hyperparam_exp1}.

\begin{table}[!ht]
    \centering
    \begin{tabular}{c|c}
        \textbf{Hyperparameter} & \textbf{Values}\\
        \hline
        $\epsilon$ & grid search in $[0.05, 0.1, 0.2, 0.3]$  \\
        $\lambda$ &  grid search in $[0.05, 0.1, 1.0]$\\
        $k$ & grid search in $[0.1, 1.0, 10.0]$\\
        optimization & Adam\\
        learning rate & 0.001\\
        temperature used in defining $F$ and $G$ & 0.75\\
        $|\Xcal|$ & 2000\\
        $d$ & 10\\
        $|\Ycal|$ & 10\\
        \hline 
    \end{tabular}
    \caption{Grid search for hyperparameter for the first experiment.}
    \label{tab:hyperparam_exp1}
\end{table}

\subsection{Additional Details for the Second Experiment on the Image Selection Task}

\begin{table}[!ht]
    \centering
    \begin{tabular}{c|c}
        \textbf{Hyperparameter} & \textbf{Values}\\
        \hline
        $\epsilon$ & grid search in $[0.05, 0.1, 0.2, 0.3]$  \\
        $\lambda$ &  grid search in $[0.05, 0.1, 1.0]$\\
        $k$ & grid search in $[0.1, 1.0, 10.0]$\\
        optimization & Adam\\
        learning rate & 0.001\\
        vocabulary size & 34\\
        word embedding dimension & 10\\
        GRU hidden dimension & 10\\
        dimension of $g^\star$ encoding & 16\\
        number of layers in GRU & 2\\
        possible templates & 10\\
        object types & [``square", ``rectangle", ``triangle", ``circle"]\\
        object color & [``red", ``blue", ``green", ``yellow", ``black", ``grey", ``black", ``cyan", ``orange"]\\
        $|\Ycal|$ & 5\\
        \hline 
    \end{tabular}
    \caption{Grid search for hyperparameter for the second experiment.}
    \label{tab:hyperparam_exp2}
\end{table}

We use the hyperparameter values shown in~\pref{tab:hyperparam_exp2}. The list of templates is given below where \{object1\} and \{color1\} are variables that are replaced by the object type and its color in a given image.
\begin{enumerate}
    \item ``You are seeing a \{object1\} of color \{color1\}"
    \item ``The image contains a \{object1\} of color \{color1\}"
    \item ``There is a \{color1\} colored object of type \{object1\}"
    \item ``A \{color1\} \{object1\}"
    \item ``The object is a \{object1\} and its color is \{color1\}."
    \item  ``The image has a single \{color1\} colored \{object1\}."
    \item  ``You are seeing a \{color1\} colored \{object1\}."
    \item ``There is a \{color1\} colored object."     
    \item ``You are seeing a \{object1\} in the image."
    \item ``There is a \{color1\} colored {object1}."
\end{enumerate}

\paragraph{Autoencoder.} We use a 3-layer autoencoder with leaky relu activations. The first two layers apply a convolution with 16 $8\times 8$ kernels with stride 4. The last layer applies 15 $4 \times 4$ kernels with stride 2. The input image is an RGB image of size $200\times 200\times 3$. After applying the CNN encoder, we get a feature of size $16 \times 4 \times 4 = 64$. We flatten this feature and apply a fully connected layer to map it to another vector of size 64. We reshape it and pass it through a 3-layer deconvolutional network with leaky relu activation, to predict an image of the same size as the input. The first layer of the decoder applies 16 $4 \times 4$ convtranspose2d of stride 2 and output padding 1. The second layer applies 16 $8\times 8$ convtranspose2d of stride 4 and output padding 1. Finally, the last layer applies 16 $8\times 8$ convtranspose2d of stride 4 with no output padding. We train the autoencoder with squared loss using Adam optimization. We apply gradient clip to clip gradient above a clipping value of 2.5. Finally, we model $g^\star(y, s)$ by first applying the encoder to generate a feature map of $16 \times 4 \times 4$, and then summing over the first dimension and flattening the remaining tensor into a 16-dimensional vector.

\paragraph{Compute.} We use A2600 for all experiments. The entire set of experiments took 3 hours to finish. We used PyTorch to implement the code.

\end{document}